\documentclass[11pt]{article}
\usepackage[margin=1in]{geometry}
\usepackage{amsmath, amssymb}
\usepackage[square,numbers]{natbib}  % 方括号引用，常用于工程类期刊
\usepackage{graphicx}
\usepackage{hyperref}

% if you need to pass options to natbib, use, e.g.:
%     \PassOptionsToPackage{numbers, compress}{natbib}
% before loading neurips_2024

% ready for submission

% % Recommended, but optional, packages for figures and better typesetting:
\usepackage{microtype}
\usepackage{graphicx}
\usepackage{subfigure}
\usepackage{booktabs}
\usepackage{amsmath}
\usepackage{amssymb}
\usepackage{amsthm}
\usepackage{algorithm}
\usepackage{algorithmic}
\usepackage{duckuments}
\usepackage{wrapfig}
\usepackage{color}
\usepackage{hyperref}
\usepackage{xspace}         % 自动在命令后添加空格，少数场景使用
\usepackage[table,xcdraw]{xcolor} % 会影响表格配色排版，建议仅
\definecolor{myblue}{HTML}{DDEAF2}
\definecolor{mygreen}{HTML}{2E7D32}

\usepackage[font=small]{caption}
\usepackage{tcolorbox}

\usepackage{bm}
\usepackage{pifont}
\usepackage[table,xcdraw]{xcolor}
\usepackage{multirow}
\usepackage{enumitem}

\usepackage{authblk}  % 用于简单控制作者-单位格式
\usepackage{geometry}
\geometry{margin=1in}

\captionsetup[algorithm]{font=small}

\newcommand{\cmark}{\checkmark}
\newtheorem{theorem}{Theorem}

\newtheorem{proposition}{Proposition}
\newtheorem{lemma}{Lemma}

\newcommand{\ie}{i.e.}

\newcommand{\ourmethod}{Mc-Di\xspace}
\newcommand{\ourmethodo}{Mv-Di\xspace}
\definecolor{myblue}{HTML}{DDEAF2}

\title{Learning to Learn Weight Generation via Local Consistency Diffusion}

\author[1,2]{Yunchuan Guan}
\author[2]{Ke Zhou}
\author[2]{Yu Liu}
\author[1]{Zhiqi Shen}
\author[3]{Jenq-Neng Hwang}
\author[3,4]{Lei Li}

\affil[1]{Nanyang Technological University}
\affil[2]{Huazhong University of Science and Technology}
\affil[3]{University of Washington}
\affil[4]{University of Copenhagen}

\date{3 Feb 2025}  % 可留空，arXiv 会自动加日期

\begin{document}
\maketitle

\begin{abstract}

Diffusion-based algorithms have emerged as promising techniques for weight generation. However, existing solutions are limited by two challenges: generalizability and local target assignment. The former arises from the inherent lack of cross-task transferability in existing single-level optimization methods, limiting the model’s performance on new tasks. The latter lies in existing research modeling only global optimal weights, neglecting the supervision signals in local target weights. Moreover, naively assigning local target weights causes local-global inconsistency. To address these issues, we propose Mc-Di, which integrates the diffusion algorithm with meta-learning for better generalizability. Furthermore, we extend the vanilla diffusion into a local consistency diffusion algorithm. Our theory and experiments demonstrate that it can learn from local targets while maintaining consistency with the global optima. We validate MC-Di's superior accuracy and inference efficiency in tasks that require frequent weight updates, including transfer learning, few-shot learning, domain generalization, and large language model adaptation.

\end{abstract}

\section{Introduction}
\label{sec:intro}
\label{sec:formatting}

\begin{wrapfigure}{r}{0.5\textwidth}
    \centering
    \vspace{-15pt} % 可调节图像与上方文本的间距
\includegraphics[width=0.5\textwidth]{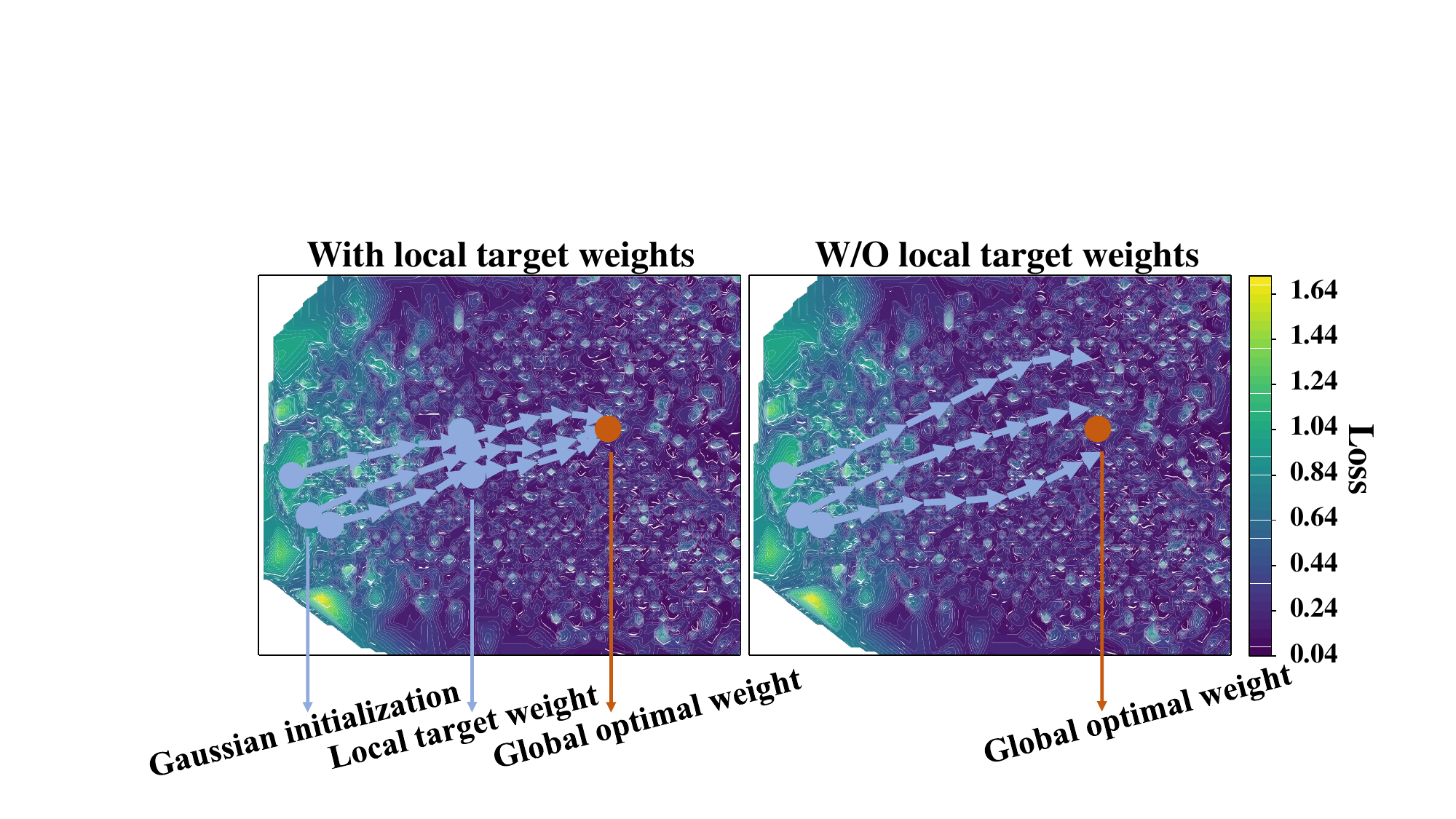}
    \caption{Visualization of inference chains in Omniglot’s 2D weight-reduced space. Darker areas indicate lower task loss. The model trained with local target weights produces accurate and efficient inference chains.}
    \label{fig:landscape_2d}
    % \vspace{-10pt} % 可调节图像与下方文本的间距
\end{wrapfigure}
Diffusion-based models have achieved impressive generative performance in various data modalities, ranging from images to structured tabular~\cite{generative_survey}. Recent works leverage a diffusion model $f^G_{\phi}$ to generate neural network weight $\theta$ for downstream tasks~\cite{OCD,GHN2,GHN3}. This technology enables training and fine-tuning to be performed in a gradient-free manner, offering promising solutions for tasks that require frequent weight updates. Examples include transfer learning, few-shot learning, domain generalization, and LLM adaptation. However, existing methods are limited by two challenges: generalizability and local target assignment.
\vspace{2pt}

\noindent\textbf{Generalizability.}
Current studies leverage models such as Variational Autoencoder (VAE)~\cite{VAE} and Hypernetwork~\cite{hypernetworks} to learn the latent distribution of optimal weights for downstream tasks. OCD~\cite{OCD} and D2NWG~\cite{D2NWG} attempt to use the diffusion model to simulate the weight optimization process. However, these approaches are constrained by single-level optimization frameworks and demonstrate limited capability for knowledge transfer between tasks, This limitation subsequently impacts their generalizability for new tasks~\cite{survey_neural, Bilevel_Optimization_generalization, meta_learning_rate}. 
\vspace{2pt}

\noindent\textbf{Local target assignment.}
Existing methods only model global optima $\theta_{M}$, overlooking the supervision signals in local target weights. These weights are acquired during optimization toward the global optima and contain the policy-specific details of the optimizer. Figure~\ref{fig:landscape_2d} shows that ignoring local targets produces inefficient and low-accuracy inference chains. Moreover, as shown in Figure~\ref{fig:consistency}, naively assigning local target weights creates inconsistency between local and global targets. Consequently, assigning local targets while maintaining consistency remains an open and non-trivial challenge.

\begin{figure*}[t]
    \centering
    \includegraphics[width=0.95\linewidth]{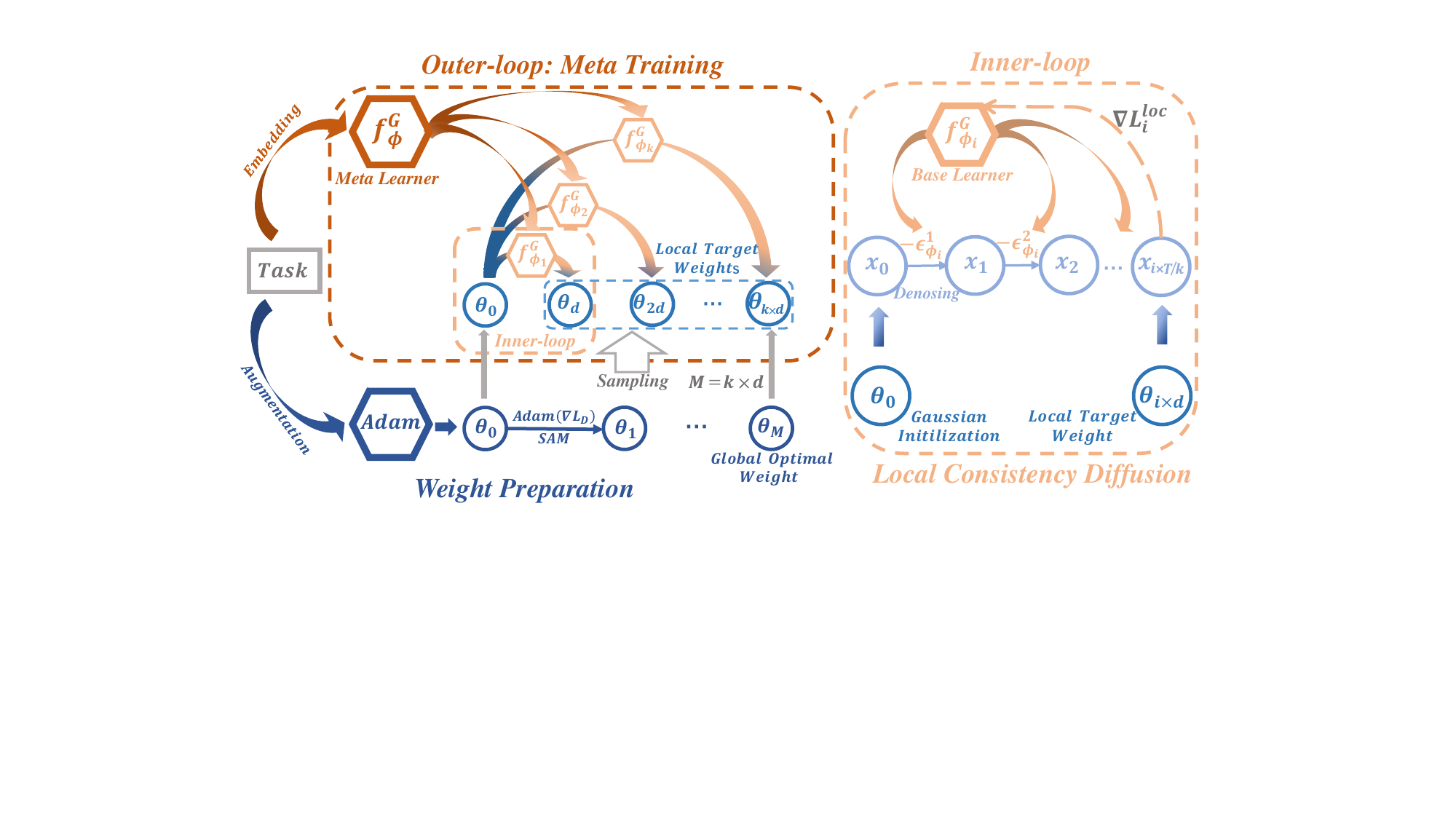}
    \caption{Workflow of \ourmethod. In the weight-preparation stage, a real-world optimizer (Adam) is used to optimize the downstream task. We collect the optimization trajectory, i.e., $\{\theta_i\}_{i=0}^M$, and sample from them to obtain local target weights $\{\theta_{i\times d}\}_{i=1}^k$. In the meta-training stage, a meta-learner $f^G_{\phi}$ assigns a base learner $f^G_{\phi_i}$ to each local target weight $\theta_{i\times d}$. Within each inner-loop, the base learner models local targets using local consistency diffusion.}
    \vspace{-1em}
    \label{fig:overview}
\end{figure*} 

\vspace{5pt}

In this paper, (1) we propose \textit{\textbf{M}}eta-learning weight generation via local \textit{\textbf{c}}onsistency \textit{\textbf{Di}}ffusion, \ie, \ourmethod. Figure~\ref{fig:overview} shows \ourmethod's workflow, which consists of weight preparation and meta-training stages. In the weight preparation stage, local target weights $\{\theta_{i\times d}\}_{i=1}^k$ are collected by sampling the optimization trajectory at uniform intervals of $d$. In the meta-training stage, we combine bi-level optimization and diffusion algorithms to model local targets for better generalizability. (2) We extend vanilla diffusion to Local Consistency Diffusion. It models local target while maintaining global consistency. Figure~\ref{fig:landscape_2d} shows how local consistency diffusion leverages local targets to guide the inference process. The above characteristics enable \ourmethod to achieve superior generative accuracy and improved inference efficiency. (3) We analyze the convergence properties of the weight generation paradigm and introduce improvements. We decouple functional components, \ie, data augmentation~\cite{survey_neural} for robustness and Sharpness-Aware Minimization (SAM)~\cite{SAM} for convergence efficiency, from the meta-training stage to the weight preparation stage. This approach improves convergence efficiency without additional time overhead. 

Our contributions can be summarized as follows:
\begin{itemize}
    \item We propose \ourmethod that combines meta-learning and diffusion algorithms for better generalizability.
    \item We propose local consistency diffusion to model local target weights. It maintains consistency between local and global targets, thus improving generation quality.
    \item We study the convergence of the weight generation paradigm and introduce SAM to improve efficiency without extra time overhead.
\end{itemize}

\section{Methodology}\label{sec:method}

The proposed \ourmethod consists of two stages: weight preparation and meta-training. It combines bi-level optimization meta-learning with a diffusion algorithm, enabling better generalizability.
\\
\par
\subsection{Weight Preparation}\label{sec:weight preparation}
As shown below in Figure~\ref{fig:overview}, the weight preparation stage aims to collect and sample local target weights. These weights are acquired during iterations toward the global optima $\theta_M$ and contain the policy-specific details of the optimizer, where $M$ represents the downstream training epoch.

Specifically, we use task data $T_i$ and a real-world optimizer, \ie, Adam~\cite{Adam}, to optimize the downstream neural network and record the corresponding optimization trajectory $\{\theta_0, \theta_1, ... ,\theta_M\}$. Note that $\theta_0$ is a Gaussian-initialized weight. In our implementation, we uniformly sample from the optimization trajectory. Consequently, the sampled local target weights are $\{\theta_{d}, \theta_{2d}, ...,\theta_{k\times d}\}$, where $d=M/k$ and $k$ is the segment number.

Benefiting from the two-stage weight generation paradigm, we can advance functional components from the inner-loop to the weight preparation stage. As shown in Figure~\ref{fig:overview}, we add Sharpness Aware Minimization (SAM) for convergence efficiency and data augmentation for robustness into the weight preparation stage. Meanwhile, since the meta-training stage can start synchronously once a few weights are collected, adding functional components during weight preparation doesn't incur additional time overhead. The specific process of the weight preparation stage is detailed in Algorithm~\ref{alg:SAM}.
\\
\par
\subsection{Meta-Training}
In the meta-training stage, we use a bi-level optimization algorithm, \ie, REPTILE~\cite{REPTILE}, as the framework to ensure efficiency. As shown in the orange region of Figure~\ref{fig:overview}, the meta-learner $f^G_{\phi}$ assigns a base learner $f^G_{\phi_i}$ to each local target $\theta_{i\times d}$.\footnote{The terms “diffusion model”, “meta-learner” $f^G_{\phi}$, and “denoiser” $\epsilon_{\phi}$, are used interchangeably in this paper.} Within each inner-loop, the base learner models the generation process of $\theta_0 \rightarrow \theta_{i\times d}$ using local consistency diffusion.
To enable learning across multiple tasks and enhance meta-learning, the diffusion model is conditioned on the task embedding $Emb_{T_i}$, which is calculated by a ResNet101 backbone used by~\cite{ICIS}. Considering $\theta_{i\times d}$ as the generation target $x_T$, a naive inner-loop objective, \ie, vanilla diffusion loss $L_i$, can be written as
\begin{align}\label{eq:inconsistency_loss}
    L_i&=\underset{t\in[0,T)}{E}||\epsilon_{\phi_i}(x_t, t, Emb_{T_i})-\epsilon||^2,\\
    x_t&=\sqrt{\bar{\alpha}_t} \theta_{i\times d} + \sqrt{1 - \bar{\alpha}_t}\epsilon, \notag,
\end{align}

As shown in Figure~\ref{fig:consistency}, vanilla diffusion loss $L_i$ disrupts local-global consistency. In Section~\ref{sec:Local Consistency Diffusion}, we propose a local consistency loss $L^{loc}_i$ for this issue, and here we just denote the inner-loop objective by $L^{loc}_i$.

According to REPTILE, the inner-loop update equation can be written as
$$
    \phi_i=\phi_i-\eta\nabla_{\phi_i} L_i^{loc},
$$
where $t$ is the timestamp, $\eta$ is the inner-loop learning rate, $\bar{\alpha}_t=\prod_{j=t}^{T-1} \alpha_j$, and $\{\alpha_j\}^{T}_{j=0}$ is an increasing schedule sequence. Then, the outer-loop process can be written as
$$
\phi=\phi+\frac{\zeta}{B}\sum_{i=1}^B({\phi_i}-\phi),
$$
where $\zeta$ is the outer-loop learning rate and $B$ is the meta-batch size. The overall meta-learning process is shown in Algorithm~\ref{alg:training}. It may raise concerns about computational complexity, as \ourmethod introduces additional local targets for learning. We detail this issue in Section~\ref{sec:Trajectory Selection}.
% \vspace{-5pt}

\begin{figure}[t]
\centering
\begin{minipage}[t]{0.48\linewidth}
    \begin{algorithm}[H]
    \small
    \caption{Weight Preparation}
    \label{alg:SAM}
    \begin{algorithmic}[1]
        \REQUIRE Downstream task weights $\theta_0$, downstream task loss $L^D$, perturbation $\rho$, task distribution $\mathcal{T}$.
        \FOR{$T_j \sim \mathcal{T}$}
            \FOR{$t$ in range($M$)}
                \STATE Add noise and rotated data (\ie, data augmentation)
                \STATE $\epsilon \gets \rho \frac{\nabla L^D(T_j;\theta_t)}{\|\nabla L^D(T_j;\theta_t)\|}$~~(SAM)
                \STATE $\theta_{t+1} \gets Adam(\theta_t, \nabla_{\theta} L^D(T_j;\theta_t + \epsilon))$
                \STATE Append $\theta_t$ in $W_j^{of}$
            \ENDFOR
        \STATE Uniformly sample local target weights $W^{loc}_j=\{\theta^j_{i\times d}\}_{i=1}^k$ from $W_j^{of}$.
        \ENDFOR
    \STATE \textbf{Return} $\{W^{\!loc}_j\}_{j=1}^n$
    \end{algorithmic}
    \end{algorithm}
\end{minipage}
\hfill
\begin{minipage}[t]{0.48\linewidth}
    \begin{algorithm}[H]
    \small
    \caption{Meta Training (Batch version)}
    \label{alg:training}
    \begin{algorithmic}[1]
        \REQUIRE Denoiser weight $\phi$, inner-loop step $K$, learning rate $\eta$, meta-learning rate $\zeta$, local target weights set $\{W^{\!loc}_j\}_{j=1}^n$, meta-batch size $B$.
        \WHILE{not converged}
            \FOR{repeated $B$ times}
                \STATE Randomly sample $j \in [0,n), i\in (0,k]$ to determined a local target weight $\theta^j_{i\times d}$
                \STATE $\phi_i \gets \phi$
                \FOR{repeated $K$ times}
                    \STATE $\phi_i \gets \phi_i - \eta\nabla_{\phi_i}L_i^{loc}(\theta^j_{i\times d},\phi_i)$~~(Eq.~\ref{eq:consistency_loss})
                \ENDFOR
                \STATE $\Delta_{\phi_i} \gets \phi_i - \phi$
            \ENDFOR
            \STATE $\phi \gets \phi + \frac{\zeta}{B}\sum_{i=1}^B \Delta_{\phi_i}$
        \ENDWHILE
    \end{algorithmic}
    \end{algorithm}
\end{minipage}
\end{figure}

\subsection{Downstream Task Evaluation}
\vspace{-5pt}
The inference process of \ourmethod aligns with the vanilla diffusion algorithm. It can be written as
\begin{equation}\label{eq:DDPM_inference}
x_{t+1} = \frac{1}{\sqrt{\bar{\alpha}_{t+1}}} \left( x_t - \sqrt{1 - \bar{\alpha}_{t+1}} \epsilon_\phi(x_t, t) \right).
\end{equation}
When $i = k$, the local target weight $\theta_{k\times d}$ coincides with the global optimal weight $\theta_M$. This implies that the well-trained diffusion model $f_{\phi^*}^G$ can recover the optimal weight $\theta_{M}$ from Gaussian-initialized weights through iterative inference.\footnote{We will omit the condition state $Emb_{T_i}$ in the following section for brevity.}

\section{Local Consistency Diffusion and Analysis}\label{sec:trajectory overall}
In this section, we introduce the proposed local consistency diffusion algorithm and analyze its computational complexity, principle, and convergence.

\subsection{Local Consistency Diffusion}\label{sec:Local Consistency Diffusion}

\begin{wrapfigure}{r}{0.3\textwidth}
    \centering
    \vspace{-60pt}
    \includegraphics[width=0.3\textwidth]{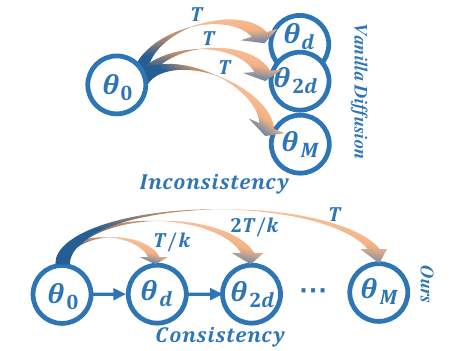}
    % \vspace{-20pt}
    \caption{Naively assigning local targets creates inconsistency.}
    \label{fig:consistency}
    % \vspace{-40pt}
\end{wrapfigure}

Directly using vanilla diffusion loss in Equation~\ref{eq:inconsistency_loss} for meta-training creates inconsistency between local and global targets. The top of Figure~\ref{fig:consistency} shows that local targets $\theta_{i\times d}$ and the global optima $\theta_M$ are equally treated as the generation target $x_T$ after $T$ denoising steps.\footnote{Note that the subscript order of $x_t$ here is reversed compared to the standard diffusion setup. This is to align with the subscript order of the target weights $\theta_i$.} This naive approach creates inconsistency between local and global targets, leading the introduced local target to negatively affect overall performance. To address this issue, the principle of local consistency diffusion can be described below.

\begin{tcolorbox}[colback=myblue, colframe=gray!80!black]
Local consistency diffusion aims to generate a sequence of weights starting from Gaussian noise, where $\theta_d$, $\theta_{2d}$, ..., $\theta_{M=k\times d}$ can be reached at evenly spaced intervals of $T/k$ steps.
\end{tcolorbox}

Based on the above principle, we define the local consistency loss $L^{loc}$ below.

\begin{theorem}\label{the:L_k}
Given the number of diffusion steps $T$, an increasing schedule $\{\alpha_0,...,\alpha_T\}$, local target weights $\{\theta_d,...,\theta_{k\times d}\}$, and let the inference process align with the vanilla diffusion algorithm, \ie, 
\begin{equation*}
x_{t+1} = \frac{1}{\sqrt{\bar{\alpha}_{t+1}}} \left( x_t - \sqrt{1 - \bar{\alpha}_{t+1}} \epsilon_\phi(x_t, t) \right).
\end{equation*}

Then, the denoiser $\epsilon_{\phi}$ can recover the target sequence $\{\theta_d,...,\theta_{M=k\times d}\}$ from standard Gaussian noise $x_0$ with evenly $T/k$ steps intervals, when $\epsilon_{\phi}$ is trained by local consistency loss $L^{loc}$ as follows:
\begin{align}\label{eq:consistency_loss}
&L^{loc}=\underset{i\in(0,k]}{E}~L^{loc}_i,\notag\\
&L_i^{loc}=\underset{t\in[0,i\times T/k)}{E}||\sqrt{1-\bar{\alpha}_t^i}\epsilon_{\phi}(x_t, t)-\sqrt{1-\bar{\alpha}_t}\epsilon||^2,\\
&x_t=\sqrt{\bar{\alpha}^i_t} \theta_{i\times d} + \sqrt{1 - \bar{\alpha}^i_t}\epsilon, \notag
\end{align}
where $\bar{\alpha}_t=\prod_{j=t}^{T-1} \alpha_j$,  $\bar{\alpha}^i_t=\prod_{j=t}^{i\times T/k-1} \alpha_j$, and $\epsilon$ denotes standard Gaussian noise.
\end{theorem}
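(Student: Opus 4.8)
The plan is to unroll the inference recursion and match it, term by term, against the forward noising process used to define each $L_i^{loc}$. First I would fix an index $i\in(0,k]$ and ask what condition on $\epsilon_\phi$ guarantees that, starting from $x_0$ standard Gaussian, the recursion $x_{t+1} = \bar{\alpha}_{t+1}^{-1/2}\bigl(x_t - \sqrt{1-\bar{\alpha}_{t+1}}\,\epsilon_\phi(x_t,t)\bigr)$ arrives at $\theta_{i\times d}$ after exactly $i\times T/k$ steps. Unrolling this linear recursion for $t=0,\dots,i\times T/k-1$ expresses $x_{i\times T/k}$ as an affine function of $x_0$ and of the accumulated denoiser outputs. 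On the other side, the reparametrization $x_t = \sqrt{\bar{\alpha}^i_t}\,\theta_{i\times d} + \sqrt{1-\bar{\alpha}^i_t}\,\epsilon$ is precisely the forward marginal that places $\theta_{i\times d}$ at step $i\times T/k$ under the truncated schedule $\bar{\alpha}^i_t=\prod_{j=t}^{i\times T/k-1}\alpha_j$; the key algebraic fact I would verify is that this family of marginals is itself consistent with the \emph{same} deterministic recursion above, provided $\epsilon_\phi(x_t,t)$ equals the "rescaled" noise $\sqrt{(1-\bar{\alpha}_t)/(1-\bar{\alpha}^i_t)}\;\epsilon$ along that trajectory. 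That is exactly the quantity whose squared error is minimized by $L_i^{loc}$, since $L_i^{loc}=E_t\|\sqrt{1-\bar{\alpha}^i_t}\,\epsilon_\phi(x_t,t)-\sqrt{1-\bar{\alpha}_t}\,\epsilon\|^2$ is zero iff $\epsilon_\phi(x_t,t)=\sqrt{(1-\bar{\alpha}_t)/(1-\bar{\alpha}^i_t)}\,\epsilon$ on the support of $x_t$.

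Concretely, the key steps in order are: (i) write the one-step map in the "$x_0$-prediction" form, i.e. show $x_{t+1}=\sqrt{\bar{\alpha}_{t+1}/\bar{\alpha}_t}\cdot(\text{denoised estimate})$, and identify the implied estimate of the clean target; (ii) substitute the optimal denoiser $\epsilon_\phi^\star(x_t,t)=\sqrt{(1-\bar{\alpha}_t)/(1-\bar{\alpha}^i_t)}\,\epsilon$ into the recursion and check by induction on $t$ that if $x_t=\sqrt{\bar{\alpha}^i_t}\,\theta_{i\times d}+\sqrt{1-\bar{\alpha}^i_t}\,\epsilon$ then $x_{t+1}=\sqrt{\bar{\alpha}^i_{t+1}}\,\theta_{i\times d}+\sqrt{1-\bar{\alpha}^i_{t+1}}\,\epsilon$, using $\bar{\alpha}^i_{i\times T/k-1}=\alpha_{i\times T/k-1}$ and $\bar{\alpha}^i_{i\times T/k}=1$ as the base/terminal bookkeeping; (iii) at $t=0$, note $\bar{\alpha}^i_0=\bar{\alpha}_{i\times T/k-... }$ — more precisely the truncated product from $0$ — so the initial condition $x_0\sim\mathcal N(0,I)$ matches $\sqrt{\bar\alpha^i_0}\theta_{i\times d}+\sqrt{1-\bar\alpha^i_0}\epsilon$ only in the limit $\bar\alpha^i_0\to 0$; I would either invoke the standard diffusion convention $\bar\alpha_0\approx 0$ or state it as the boundary assumption. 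Then at $t=i\times T/k$ we get $x_{i\times T/k}=\theta_{i\times d}$ exactly, which is the claim for segment $i$; (iv) finally observe that the trajectory for index $i$ restricted to $t<j\times T/k$ coincides with the trajectory for any $j<i$ up to the shared denoiser, so a single $\epsilon_\phi$ minimizing $L^{loc}=E_i L_i^{loc}$ simultaneously hits every $\theta_{j\times d}$ at step $j\times T/k$; this is where the "consistency" is actually used, and I would point out that the losses $L_i^{loc}$ are compatible (their optimal targets agree on overlapping time ranges) precisely because the rescaling factor $\sqrt{(1-\bar{\alpha}_t)/(1-\bar{\alpha}^i_t)}$ is independent of $i$ when $t$ is small relative to all the segment lengths — this needs a short check that the two definitions of $\bar\alpha^i_t$ for different $i$ induce the same ratio, or alternatively that the global schedule $\bar\alpha_t$ is the common reference.

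I expect the main obstacle to be step (iv), the mutual compatibility of the per-segment objectives: a priori the optimal denoiser value at a given $(x_t,t)$ demanded by $L_i^{loc}$ could differ from the one demanded by $L_j^{loc}$, which would make the joint minimizer of $L^{loc}$ fail to be the exact-recovery denoiser for any single segment. Resolving this requires showing that the reparametrized $x_t$ carries enough information (through its dependence on both $\theta_{i\times d}$ and the specific $\epsilon$) that the regression target $\sqrt{1-\bar\alpha_t}\,\epsilon/\sqrt{1-\bar\alpha^i_t}$ is a well-defined function of $x_t$ and $t$ alone and is the \emph{same} function across segments — or, failing an exact argument, that it holds under the paper's sampling scheme where each minibatch step draws a single $(i,j)$ pair so the expectation decouples. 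A secondary, more mechanical obstacle is bookkeeping the index reversal (the footnote warns that $x_t$'s subscripts run opposite to standard DDPM) and the non-standard truncated products $\bar\alpha^i_t=\prod_{j=t}^{i\times T/k-1}\alpha_j$; I would set up notation carefully at the start so that the telescoping in step (ii) is transparent.
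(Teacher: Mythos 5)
Your route is genuinely different from the paper's, and it has a gap at its core step. The paper's proof (Appendix~\ref{sec:appendix_1}) is distributional, not pathwise: it treats $\theta_{i\times d}$ as the observed data at $T'=i\,T/k$, expands the posterior $q(x_{t+1}\mid x_t,x_{T'})$ by Bayes under the truncated schedule $\bar\alpha^{T'}_t$, and shows that minimizing the KL divergence between that posterior and the fixed inference kernel is proportional to $L^{loc}_i$; recovery of the targets is then the usual ELBO-style guarantee. Your step (ii), by contrast, asserts a pathwise identity: that plugging the zero-loss denoiser $\epsilon^\star_\phi(x_t,t)=\sqrt{(1-\bar\alpha_t)/(1-\bar\alpha^i_t)}\,\epsilon$ into the deterministic recursion maps $x_t=\sqrt{\bar\alpha^i_t}\,\theta_{i\times d}+\sqrt{1-\bar\alpha^i_t}\,\epsilon$ to $x_{t+1}=\sqrt{\bar\alpha^i_{t+1}}\,\theta_{i\times d}+\sqrt{1-\bar\alpha^i_{t+1}}\,\epsilon$. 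This does not hold. With the recursion as written in the statement, the clean-signal coefficient becomes $\sqrt{\bar\alpha^i_t/\bar\alpha_{t+1}}=\sqrt{\alpha_t\bar\alpha^i_{t+1}/\bar\alpha_{t+1}}\neq\sqrt{\bar\alpha^i_{t+1}}$, because the inference step carries the global product $\bar\alpha_{t+1}$ while the forward marginal carries the truncated one. With the DDPM-mean form actually used in the appendix, $x_{t+1}=\tfrac{1}{\sqrt{\alpha_t}}x_t-\tfrac{1-\alpha_t}{\sqrt{1-\bar\alpha_t}\sqrt{\alpha_t}}\epsilon_\phi$, the signal coefficient comes out right but the noise coefficient becomes $\sqrt{\alpha_t}\,(1-\bar\alpha^i_{t+1})/\sqrt{1-\bar\alpha^i_t}$, which is strictly smaller than $\sqrt{1-\bar\alpha^i_{t+1}}$, so the induction hypothesis is not reproduced. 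The reverse-mean formula is an identity of conditional expectations, not a pathwise identity with a shared $\epsilon$; once $x_{t+1}$ leaves the forward marginal, you can no longer claim the trained denoiser outputs the rescaled noise at the next step. To make your unrolling argument work you would have to either reintroduce the posterior-variance noise and argue in distribution (which is precisely the KL/Bayes route the paper takes) or switch to a DDIM-type deterministic sampler whose coefficients are constructed for pathwise consistency — neither of which is what you wrote.

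On step (iv): you are right to flag cross-segment compatibility as the main obstacle, but the fix you sketch is unavailable. The regression target $\sqrt{(1-\bar\alpha_t)/(1-\bar\alpha^i_t)}\,\epsilon$ does depend on $i$ — explicitly through $\bar\alpha^i_t=\prod_{j=t}^{i\times T/k-1}\alpha_j$, and implicitly because the conditioning input $x_t$ is centered at $\sqrt{\bar\alpha^i_t}\,\theta_{i\times d}$ — so the per-segment optimal denoisers demanded at a common $(x_t,t)$ need not coincide, and the claimed $i$-independence of the rescaling factor is false as stated. The paper does not resolve this pointwise either; it argues each $L^{loc}_i$ separately at the ELBO level and then appeals to the fact that all segments share the same fixed inference process, so your instinct that something is being glossed over here is fair, but your proposal cannot lean on that (incorrect) independence claim. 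Your point (iii) on the boundary bookkeeping ($\bar\alpha^i_{i\times T/k}$ an empty product equal to $1$, and $\bar\alpha^i_0\approx 0$ by the usual schedule convention) is correct and consistent with the paper.
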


The proof is detailed in Appendix~\ref{sec:appendix_1}. In fact, when $k=1$, local consistency diffusion is equivalent to the vanilla diffusion algorithm. At this point, the denoiser solely models the global optimal weight $\theta_M$. We prove this claim in Appendix~\ref{sec:appendix_propo}. We refer to the model corresponding to this case as \ourmethodo. The right side of Figure~\ref{fig:landscape_2d} depicts the inference chains of \ourmethodo. Moreover, existing methods~\cite{OCD,MetaDiff,D2NWG} reduce to the $k=1$ case, so \ourmethod can be regarded as their generalization.

\subsection{Analysis}\label{sec:analysis} In this section, we discuss issues concerning computational complexity, principles, and convergence.
\begin{itemize}
    \item How does the segment number $k$ affect computational complexity?
    \item How does local consistency diffusion enable better efficiency and accuracy?
    \item The convergence properties of the weight generation paradigm and how to improve them.
\end{itemize}

\subsubsection{Computational complexity}\label{sec:Trajectory Selection}
In practice, local consistency diffusion achieves lower computational cost compared to the vanilla diffusion algorithm. This claim contradicts the intuition from Equation~\ref{eq:consistency_loss}, where local consistency loss has a computational complexity of $O(k\times T/2)$, compared to $O(T)$ for the vanilla diffusion loss. However, experimental results in Figure~\ref{fig:loss_trajectory} and analysis of Figure~\ref{fig:loss_hie} support our claim.

Under the constraint of three GPU hours, Figure~\ref{fig:loss_trajectory} shows the relationship between the segment number $k$ and reconstruction MSE (Mean Square Error) between the final predicted weight and the ground truth weight. We perform a case study on Omniglot~\cite{omniglot} and Mini-Imagenet~\cite{miniImagenet} datasets for 5-way tasks. When $k=1$, our method degenerates into vanilla diffusion. When $k > 1$, our method improves generation accuracy by introducing additional local target $\{\theta_{i\times d}\}_{i=1}^{k-1}$. Note that this result is obtained under the same 3 GPU-hours constraint, highlighting the lower practical computational cost of our method. Considering the stable trade-off across multiple datasets, we set $k=3$ as the default in subsequent experiments. 

\subsubsection{Efficiency and Accuracy Improvement}\label{sec:discuss acceleration}

The reduced computational cost of our method can be loosely explained by its division of a search space with radius $T$ into $m$ smaller ones with radius $T/k$. Figure~\ref{fig:loss_hie} supports the above claim. When $k=3$, we record three optimization objectives, \ie, $L^{loc}_1$, $L^{loc}_2$, and $L^{loc}_3$, corresponding to two local target weights $\theta_{M/3},\theta_{2M/3}$ and a global optimal weight $\theta_M$, on Omniglot 5-way 1-shot tasks. It can be found that $L^{loc}_1$ converges first. This is because $\theta_{M/3}$ is closest to the Gaussian-initialized weight ${\theta_0}$ and requires the fewest training epochs. Thus, the remaining learning target is to generate ${\theta_M}$ starting from $\theta_{M/3}$. By recursively applying this operation, a $T$-steps diffusion problem breaks down into $k$ shorter ones. As a result, our method improves practical computational complexity.

\begin{figure}[t]
    \centering
    \begin{minipage}[t]{0.24\linewidth}
        \centering
        \includegraphics[width=\linewidth]{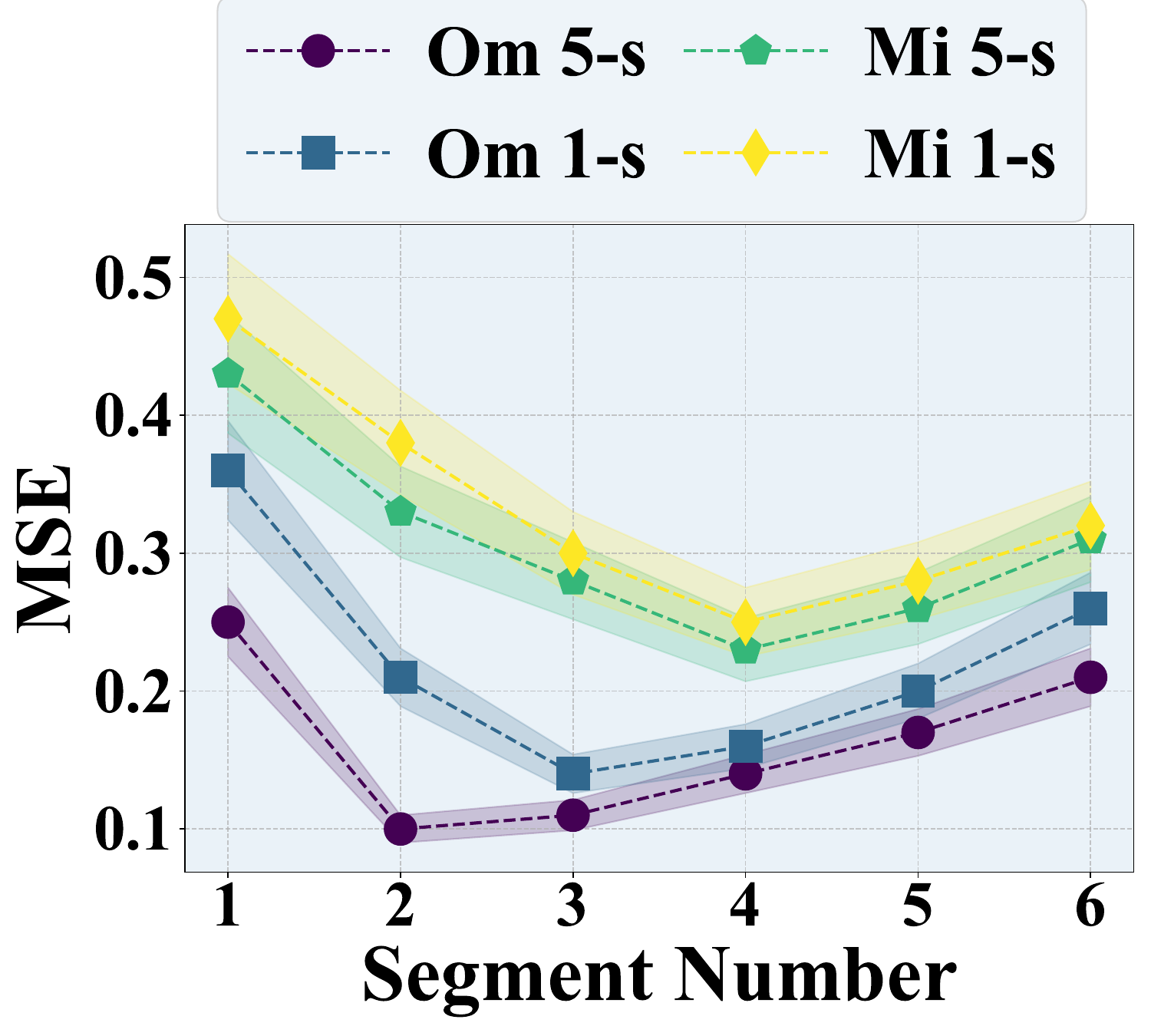}
        \caption{Segment Number vs. MSE trade-off on 5-way 1-shot and 5-shot tasks.}
        \label{fig:loss_trajectory}
    \end{minipage}
    \hspace{0.01\linewidth}
    \begin{minipage}[t]{0.24\linewidth}
        \centering
        \includegraphics[width=\linewidth]{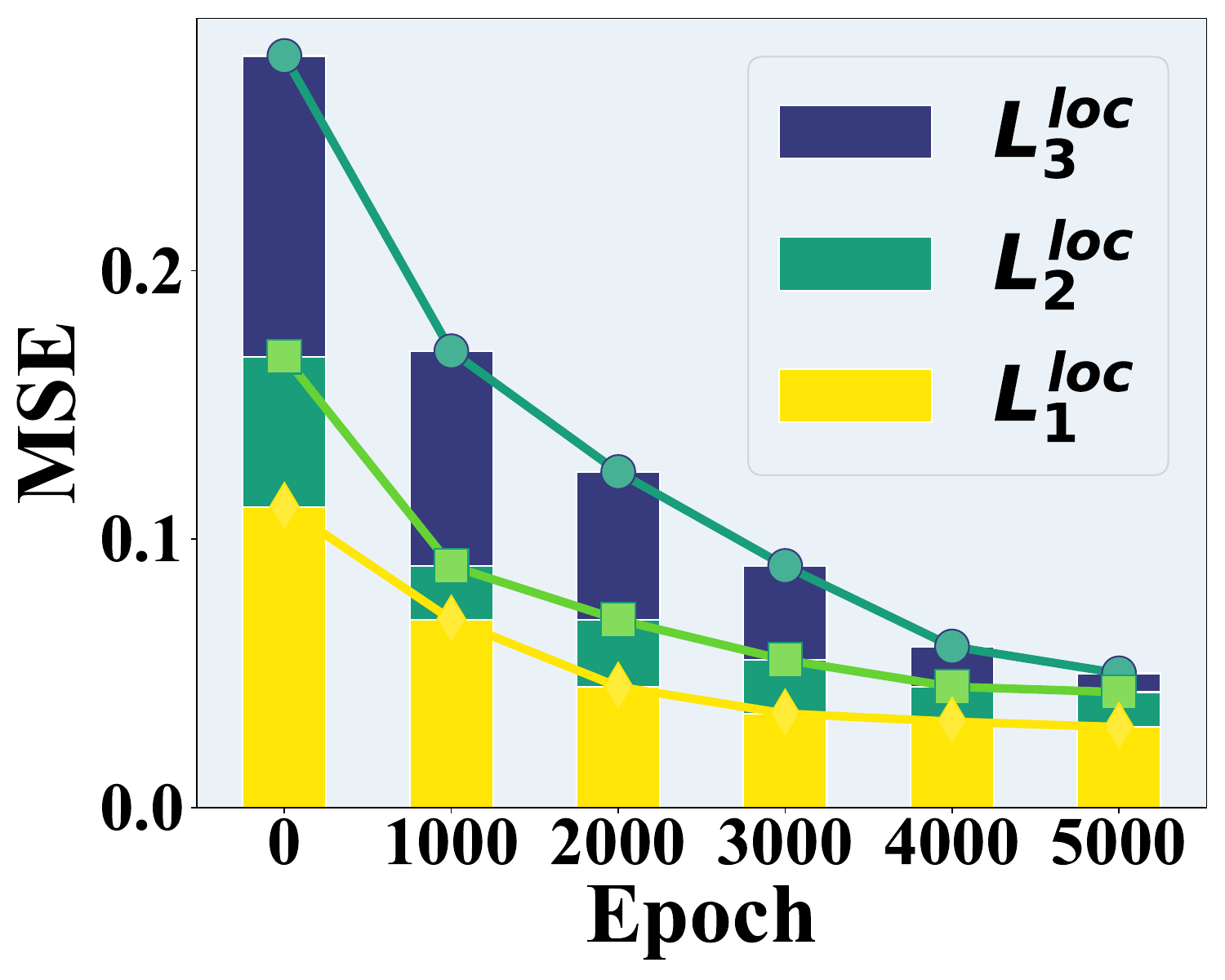}
        \caption{Convergence speed of each $L^{loc}_i$ on Omniglot 5-way 1-shot task.}
        \label{fig:loss_hie}
    \end{minipage}
    \hspace{0.01\linewidth}
    \begin{minipage}[t]{0.47\linewidth}
        \centering
        \includegraphics[width=\linewidth]{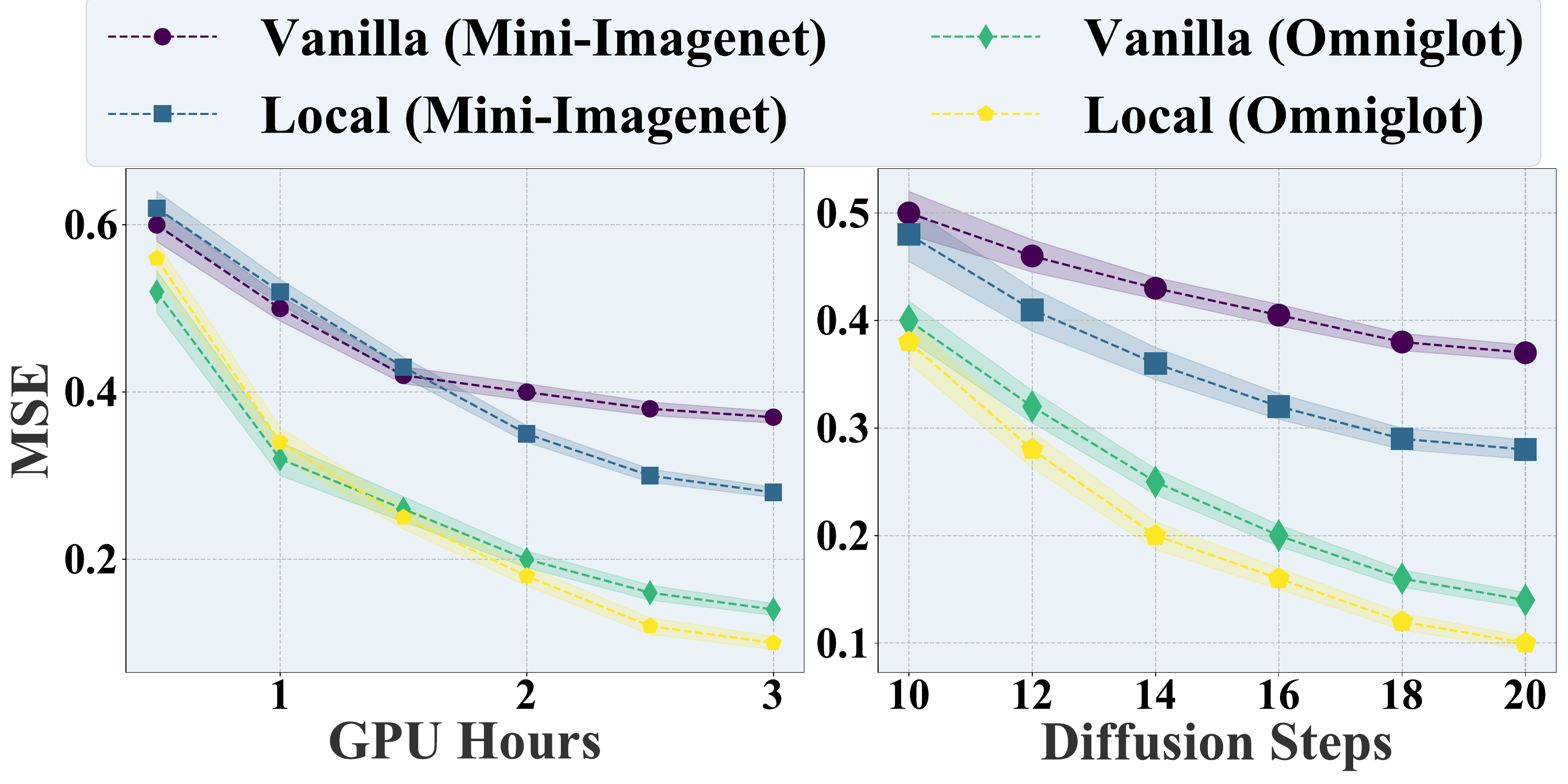}
        \caption{Comparison between vanilla and local consistency diffusion on Mini-ImageNet and Omniglot (5-way 1-shot). Left: GPU hours vs. MSE. Right: Diffusion steps vs. MSE.}
        \label{fig:trade-off}
    \end{minipage}
\end{figure}

Figure~\ref{fig:trade-off} compares the trade-off curves between the vanilla diffusion and the local consistency diffusion on Omniglot and Mini-Imagenet 5-way 1-shot tasks. The left figure shows GPU-Hours vs. MSE trade-off, and the right one is Diffusion Steps vs. MSE trade-off. As training progresses, our method increasingly outperforms the vanilla diffusion. The reason is that, in the early stages of training, local consistency diffusion primarily focuses on the learning of prerequisite objectives, such as $L_1$ and $L_2$. Once the learning of these prerequisite objectives is completed, our method converges faster to the optimal weights, \ie, $\theta_M$. Benefiting from this, as shown on the right side of Figure~\ref{fig:trade-off}, local consistency diffusion can tolerate fewer diffusion steps for the same MSE, thereby enhancing both efficiency and accuracy.

\subsubsection{Convergence Analysis and Improvement}\label{sec:SAM}
The paradigm of learning to generate weights involves two independent loss terms, \ie, downstream task loss $L^D$ and the local consistency loss $L^{loc}$. This makes ensuring the convergence of the paradigm a non-trivial issue. The following convergence analysis holds for all weight generation methods.
% \begin{assumption}\label{assumption}
% \noindent
% \begin{enumerate}
%     \item \textbf{Reconstruction error upper bound.}
%     The reconstruction error produced by the generative model is bounded by $c$.
%     \item \textbf{Loss function upper bound.}
%     The downstream task loss $L_d(\cdot) \leq \psi$.
%     \item \textbf{l-smoothness.}
%     There exists a constant $l$ such that for all weights \( \theta, \theta' \in \mathbb{R}^n \)
%     \[
%     \|\nabla L_d(\theta) - \nabla L_d(\theta')\| \leq l \|\theta - \theta'\|
%     \]
%     \item \textbf{$\mu$-strong convexity.}
%     There exists a constant \( \mu > 0 \) such that for all \( \theta,\theta' \in \mathbb{R}^n \),
%     \[
%     L_d(\theta') \geq L_d(\theta) + \nabla L_d(\theta)^\top (\theta' - \theta) + \frac{\mu}{2} \| \theta' - \theta \|^2.
%     \]
%     \item \textbf{Hessian matrix upper bound.}
%     The Hessian matrix eigenvalue around the neighborhood of optimal weight $\theta^*$ is bounded by $\lambda$.
% \end{enumerate}
% \end{assumption}

% \begin{theorem}\label{theorem:emperi error}
% When Assumption~\ref{assumption} holds, Lt-Di's cumulative empirical error can be bound by:
% $$
%     L_d(\hat{\theta})-L_d(\theta^*) \leq \frac{\lambda}{2} \left[c+\frac{2\psi}{\mu}\left(1 - \frac{\mu}{l}\right)^{epoch}\right],
% $$
% where $epoch$ is the number of update steps used in the weight preparation stage, and $\hat{\theta}$ is the weight predicted by the generative model.
% \end{theorem}

\begin{theorem}\label{theorem:emperi error}
Assume that the reconstruction error of the generative model is bounded by \( c \), the downstream loss is bounded by $\psi$, and the loss function is both \( l \)-smooth and \( \mu \)-strongly convex, with the eigenvalues of the Hessian matrix around the optimum \( \theta_* \) bounded by \( \lambda \). Then, the cumulative empirical error of the weight generation paradigm can be bounded as follows:
\begin{equation}
    L^D(\hat{\theta})-L^D(\theta^*) \leq \frac{\lambda}{2} \left[c+\frac{2\psi}{\mu}\left(1 - \frac{\mu}{l}\right)^{M}\right],
\end{equation}
where $\hat{\theta}$ is the weight predicted by the generative model.
\end{theorem}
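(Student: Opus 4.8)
The plan is to decompose the excess downstream loss $L^D(\hat{\theta}) - L^D(\theta^*)$ via a triangle-like inequality through the global optimum $\theta_M$ of the real-world optimizer's trajectory. Write
\[
L^D(\hat{\theta}) - L^D(\theta^*) = \bigl[L^D(\hat{\theta}) - L^D(\theta_M)\bigr] + \bigl[L^D(\theta_M) - L^D(\theta^*)\bigr],
\]
and bound the two bracketed terms separately. The first term is a \emph{generation error}: $\hat{\theta}$ is the generative model's reconstruction of $\theta_M$, so $\|\hat{\theta} - \theta_M\|^2 \le c$ by the reconstruction-error assumption; using $l$-smoothness (or more precisely the $\lambda$-bound on the Hessian eigenvalues near $\theta_*$) together with $\nabla L^D(\theta_M) \approx 0$ gives a second-order expansion $L^D(\hat\theta) - L^D(\theta_M) \le \frac{\lambda}{2}\|\hat\theta - \theta_M\|^2 \le \frac{\lambda}{2} c$. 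The second term is an \emph{optimization error}: $\theta_M$ is the output of $M$ steps of the downstream optimizer, so the standard convergence rate for $l$-smooth, $\mu$-strongly convex functions under (sub)gradient/GD-type updates yields $L^D(\theta_M) - L^D(\theta^*) \le \bigl(1 - \tfrac{\mu}{l}\bigr)^{M}\bigl(L^D(\theta_0) - L^D(\theta^*)\bigr)$, and bounding the initial suboptimality by $\psi$ (the bound on the downstream loss) and converting the function-value gap to the stated $\frac{2\psi}{\mu}$-form via strong convexity gives $L^D(\theta_M) - L^D(\theta^*) \le \frac{2\psi}{\mu}\bigl(1-\tfrac{\mu}{l}\bigr)^{M}$.

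Carrying this out in order: first I would invoke strong convexity and smoothness to establish the linear-convergence recursion $L^D(\theta_{t+1}) - L^D(\theta^*) \le (1 - \mu/l)\bigl(L^D(\theta_t) - L^D(\theta^*)\bigr)$, unroll it $M$ times, and absorb the initial gap into $\psi$. Second, I would Taylor-expand $L^D$ around $\theta_M$ (treating it as an approximate stationary point of the downstream objective), drop the first-order term, and use the Hessian eigenvalue bound $\lambda$ to get the $\frac{\lambda}{2}c$ contribution from the reconstruction error. Third, I would re-express the optimization-error term so that the common factor $\frac{\lambda}{2}$ can be pulled out front — this requires noting that the function-value gap at $\theta_M$ is itself controlled by $\frac{\lambda}{2}\|\theta_M - \theta^*\|^2$-type quantities, or alternatively folding the constants so the final bound reads $\frac{\lambda}{2}\bigl[c + \frac{2\psi}{\mu}(1-\mu/l)^M\bigr]$ as stated. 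Finally I would add the two pieces.

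The main obstacle I anticipate is reconciling the constants so that $\frac{\lambda}{2}$ factors out of \emph{both} terms cleanly: the optimization-error bound most naturally comes out in terms of $\mu$, $l$, and $\psi$ without an obvious $\lambda$, so I expect the argument needs an extra step relating the function-value suboptimality at $\theta_M$ to a squared-distance quantity that can then be passed through the $\lambda$-Hessian bound — essentially using $\mu$-strong convexity for the lower bound $\|\theta_M - \theta^*\|^2 \le \frac{2}{\mu}(L^D(\theta_M) - L^D(\theta^*))$ and the $\lambda$-bound for an upper bound of the same form, which is where the somewhat loose combined constant originates. A secondary subtlety is that $\theta_M$ is produced by Adam with SAM perturbations and data augmentation rather than vanilla GD, so strictly the linear rate $(1-\mu/l)^M$ should be read as the idealized gradient-descent rate that the analysis adopts as a proxy; I would state this as a modeling assumption rather than prove it for Adam.
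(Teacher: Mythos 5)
Your overall architecture matches the paper's: split the error into an optimization part (how far $\theta_M$ is from $\theta_*$ after $M$ steps) and a reconstruction part (how far $\hat\theta$ is from $\theta_M$), bound the former by the standard linear rate for $l$-smooth, $\mu$-strongly convex objectives (smoothness descent step, PL inequality, unroll, then convert the function gap to a squared distance via strong convexity — this is exactly the paper's Lemma, including reading the trajectory as idealized gradient descent with step size $1/l$ rather than Adam+SAM), and pass everything through the Hessian eigenvalue bound $\lambda$. Your anticipated ``extra step'' for making $\tfrac{\lambda}{2}$ factor out of both terms is also the right instinct.

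The genuine gap is in where you place the second-order expansion. You Taylor-expand $L^D(\hat\theta)-L^D(\theta_M)$ around $\theta_M$ and drop the first-order term on the grounds that $\nabla L^D(\theta_M)\approx 0$. But $\theta_M$ is only an approximate stationary point, so the cross term $\nabla L^D(\theta_M)^\top(\hat\theta-\theta_M)$ does not vanish; if you try to control it (e.g.\ via $\|\nabla L^D(\theta_M)\|\le l\|\theta_M-\theta_*\|$) you pick up a term of order $(1-\mu/l)^{M/2}\sqrt{c}$ that does not fold into the stated bound $\tfrac{\lambda}{2}\bigl[c+\tfrac{2\psi}{\mu}(1-\mu/l)^{M}\bigr]$, so your function-value decomposition does not deliver the theorem's constant as written. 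The paper avoids this entirely: it performs a single Taylor expansion around the \emph{true} optimum $\theta_*$, where $\nabla L^D(\theta_*)=0$ exactly, obtaining $L^D(\hat\theta)-L^D(\theta_*)\le \tfrac{\lambda}{2}\|\hat\theta-\theta_*\|^2$, and only then decomposes the squared distance as $\|\hat\theta-\theta_*\|^2\le\|\hat\theta-\theta_M\|^2+\|\theta_M-\theta_*\|^2$, applying the reconstruction bound $c$ to the first piece and the Lemma (with $L^D(\theta_0)-L^D(\theta_*)\le\psi$) to the second. In short: decompose in parameter space after expanding at $\theta_*$, not in function-value space with an expansion at $\theta_M$; with that change your remaining steps go through essentially verbatim.
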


The proof, provided in Appendix~\ref{sec:appendix_2}, relies on the triangle inequality to decompose the accumulated error into weight preparation error and reconstruction error. Theorem~\ref{theorem:emperi error} shows that, such a weight generation paradigm affects the overall convergence in only a linear manner, \ie, $c+\frac{2\psi}{\mu}\left(1 - \frac{\mu}{l}\right)^{M}$. Furthermore, this convergence upper bound can be effectively improved by reducing the maximum eigenvalue $\lambda$. 

We introduce Sharpness-Aware Minimization (SAM)~\cite{SAM} into the weight preparation stage, to penalize $\lambda$ by constraining the curvature near global optima. Section~\ref{sec:weight preparation} mentioned that the additional SAM component doesn't incur additional time overhead. Figure~\ref{fig:components_ablation} shows the effect of SAM on convergence efficiency. The process of SAM is shown in Algorithm~\ref{alg:SAM}. Note that although $\mu$-convex is a strong assumption, it does not limit the practicality of our analysis and using SAM. This is because the overall structure of the accumulated error $\lambda(a+b)$ remains unaffected by this assumption.

% \begin{align}\label{eq:total_loss}
%     L&=\sum_{i=0}^{m} \gamma_i L_i\notag\\
%     &=\sum_{i=0}^{m} \gamma_i \sum_{t=k+1}^T||\sqrt{1-\bar{\alpha}_t^k}\epsilon_{\phi}(x_t, t-k)-\sqrt{1-\bar{\alpha}_t}\epsilon_k||^2\cdot \mathbf{1}_{\{t \in (k,r(i+1)]\}}
% \end{align}
% where $k=r(i)$, $x_t=\sqrt{\bar{\alpha}^k_t} \mathbf{x}_k + \sqrt{1 - \bar{\alpha}^k_t}\epsilon_k$, $\bar{\alpha}_t=\prod_{j=1}^t \alpha_j$, $\bar{\alpha}^k_t=\prod_{j=k+1}^t \alpha_j$, $\gamma_i$ is the coefficient of $L_i$, and $\epsilon_k$ is a standard Gaussian noise.
\section{Experiment}\label{sec:experiment}
Our experimental platform includes two A100 GPUs, one Intel Xeon Gold 6348 processor, and 512 GB of DDR4 memory. For all experiment results, we report the mean and standard deviation over 5 independent repeated experiments. In the following section, we present the basic experimental results and setups, with more details provided in Appendix~\ref{sec:appendix_4}.
\subsection{Ablation Study}\label{sec:ablation}
\subsubsection{Main Components}
The advantages of \ourmethod stem from these main components. 
\begin{itemize}[align=left]
  \item C1: Combining meta-learning with diffusion-based weight-generation.
  \item C2: Introducing local target weights to guide the inference chain.
  \item C3: Using local consistency loss to maintain local–global consistency.
\end{itemize}

Note that C2 builds on C1, and C3 builds on C2. Therefore, we conduct an incremental ablation study. As shown in Table~\ref{tab:ablation_main}, we validate the effectiveness of each component on Omniglot and Mini-ImageNet 5-way 1-shot tasks. When none of the components are used, \ourmethod degrades to the REPTILE (meta-learning only) or OCD~\cite{OCD} (diffusion only). When only C1 is used, \ourmethod degrades to \ourmethodo. When only using C1 and C2, we refer to this method as Tw-Di.

The comparison between REPTILE and \ourmethodo shows the effectiveness of diffusion-based methods. OCD employs a single-layer optimization diffusion, and its comparison with \ourmethodo emphasizes the necessity of using meta-learning's bi-level optimization. When local target weights are introduced without maintaining local–global consistency, Tw-Di underperforms \ourmethodo, highlighting the importance of consistency. When introducing local consistency loss, \ourmethod achieves the highest accuracy. 

\begin{figure}[t] % 使用 float 提供的 [H] 参数强制置顶
\centering
\begin{minipage}[h]{0.48\linewidth}
    \centering
    \captionsetup{type=table}
    \resizebox{\linewidth}{!}{
    \begin{tabular}{lcccccc}
        \toprule
        & \textbf{C1} & \textbf{C2} & \textbf{C3} & \textbf{Omniglot} & \textbf{Mini-Imagenet} \\
        \midrule
        REPTILE & & & & 95.39 ± 0.09 & 47.07 ± 0.26 \\
        OCD & & & & 95.04 ± 0.18 & 59.76 ± 0.27 \\
        \ourmethodo & \cmark & & & 96.65 ± 0.19 & 62.53 ± 0.37 \\
        Tw-Di & \cmark & \cmark & & 94.28 ± 0.31 & 49.72 ± 0.23 \\
        \rowcolor{myblue} \ourmethod & \cmark & \cmark & \cmark & 97.34 ± 0.19 & 64.87 ± 0.32 \\
        \bottomrule
    \end{tabular}}
    \captionof{table}{Ablation of main components on Omniglot and Mini-Imagenet datasets. We record the accuracy of each variant on 5-way 1-shot tasks.}
    \label{tab:ablation_main}
\end{minipage}
\hfill
\begin{minipage}[h]{0.48\linewidth}
    \centering
    \vspace{-32pt}
    \includegraphics[width=\linewidth]{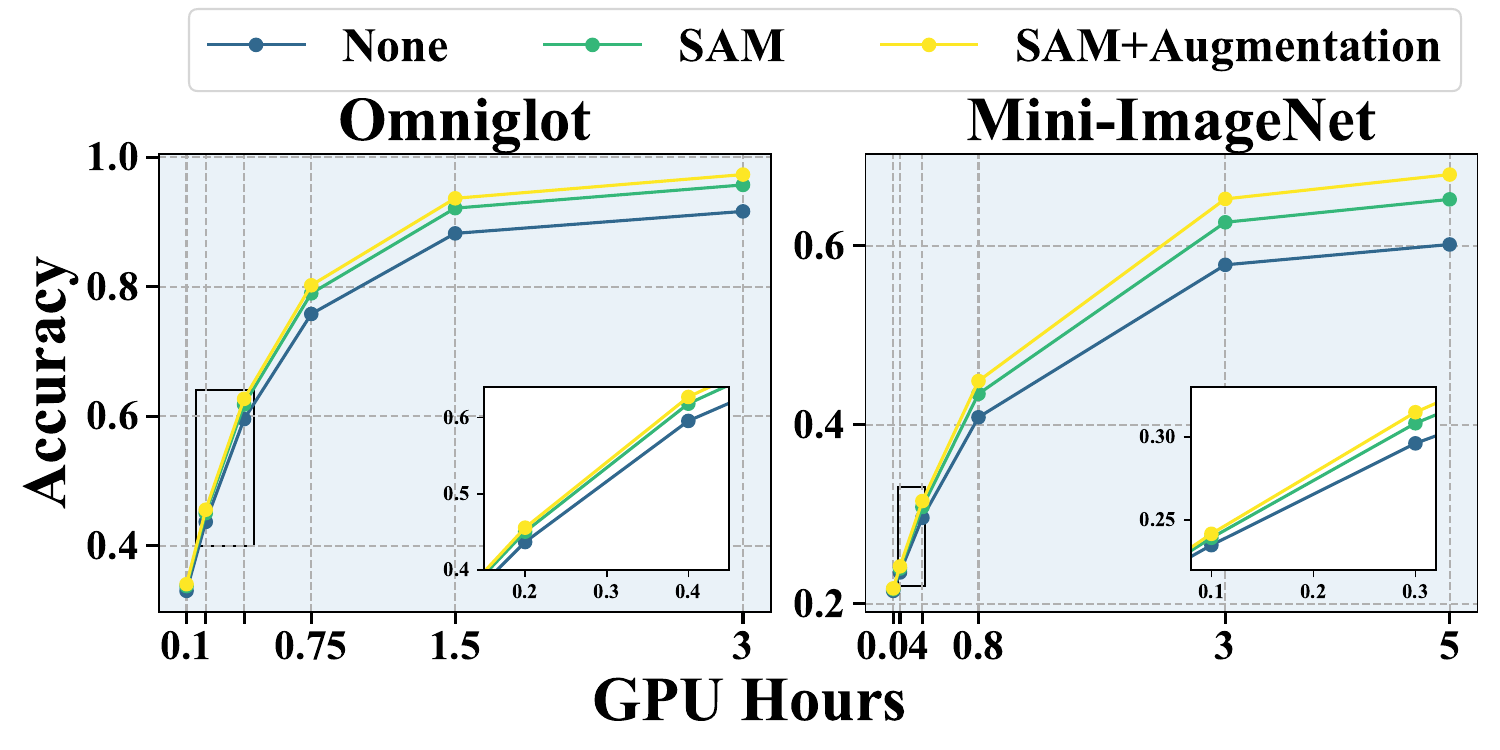}
    \vspace{-20pt}
    \captionsetup{type=figure}
    \captionof{figure}{Ablation of functional components. Accuracy vs. GPU Hours trade-off curves on Omniglot 5-way 1-shot tasks.}
    \label{fig:components_ablation}
\end{minipage}
\end{figure}

\subsubsection{Functional components}\label{sec:ablation_functional}
Following the setup given by~\citep{MAML}, we conducted a case study to evaluate the accuracy improvement and overhead burden brought by each functional component. We incrementally added SAM and data augmentation components to \ourmethod's data preparation stage. In the Omniglot and Mini-Imagenet datasets, we construct 5-way 1-shot tasks and record the model's accuracy on the meta-test set at each period of training. Figure~\ref{fig:components_ablation} shows the GPU Hours vs. Accuracy trade-off curve. The three curves in the figure exhibit identical convergence rates and progressively increasing accuracy. These results demonstrate that adding functional components can improve the model's performance without additional time overhead at \textbf{\textit{any stage of training}}.

\subsection{Comparison Experiments}\label{sec:main_experiments}

\subsubsection{Transfer Learning}
\noindent\textbf{Task.} In this task, we train and evaluate models on disjoint pre-training and evaluation datasets. During evaluation, we do not use any labeled data to adjust the models. We evaluated the generalizability of the models using both accuracy and average per-sample evaluation latency.
\par
\noindent\textbf{Dataset.} We partitioned ImageNet-1k~\cite{imagenet} into 20k subsets of 50 classes each with 50 images per class per task for meta-training. The evaluation datasets are CIFAR-10, CIFAR-100~\cite{CIFAR1O_100}, STL-10~\cite{STL10}, Aircraft~\cite{aifcraft}, and Pets~\cite{pets}.
\par
\noindent\textbf{Baselines.}
We benchmark against ICIS~\cite{ICIS}, GHN3~\cite{GHN3}, and D2NWG~\cite{D2NWG}. ICIS directly learns from downstream task samples, while the others learn to generate weights. For all the aforementioned models, the downstream network uses ResNet12~\cite{resnet} with a linear probe for classification. For the diffusion model $f^G_{\phi}$, \ourmethod employs the same U-Net architecture given by Meta-Diff~\cite{MetaDiff}. The task embedding $Emb_{T_i}$ is calculated by a ResNet101 backbone used by ICIS. In the weight preparation stage, the real-world optimizer (Adam) uses a fixed learning rate of 0.005 and an automatic early-stopping strategy~\cite{early-stop} to determine the downstream task training epoch $M$. In the meta-training stage, we set the learning rate $\eta$, meta-learning rate $\zeta$, inner-steps, and training epochs to 0.005, 0.001, 3, and 6000, respectively. The segment number $k$ is set to 3, and the diffusion step $T$ is set to 20. \textbf{We maintain this setup across all experiments in this paper.}
\par
\noindent\textbf{Results.}
Table~\ref{tab:zero_shot_performance} shows that \ourmethod achieves the highest accuracy on five out of six tasks while reducing average latency. Compared to the second-best method (except for \ourmethodo) on each task, it yields an average accuracy improvement of 2.42\%. On CIFAR-100, its accuracy is only 0.81\% lower than the best-performing method. Compared to the fastest existing method (\ie, D2NWG), \ourmethod reduces inference latency by 1.9$\times$. As discussed in Section~\ref{sec:ablation}, \ourmethod also outperforms its simplified variant \ourmethodo, demonstrating the effectiveness of introducing local target weights.

\begin{table*}[t]
\centering
\resizebox{\linewidth}{!}{
\begin{tabular}{lllllll}
\hline
\textbf{Method} & \textbf{CIFAR-10} & \textbf{CIFAR-100} & \textbf{STL-10} & \textbf{Aircraft} & \textbf{Pets} & \textbf{Latency (ms)} \\ \hline
ICIS~\cite{ICIS} & 61.75 ± 0.31 & 47.66 ± 0.24 & 80.59 ± 0.12 & 26.42 ± 1.56 & 28.71 ± 1.60 & 9.2 \\
GHN3~\cite{GHN3} & 51.80 ± 0.42 & 11.90 ± 0.45 & 75.37 ± 0.19 & 23.19 ± 1.38 & 27.16 ± 1.08 & 14.5 \\
D2NWG~\cite{D2NWG} & 60.42 ± 0.75  & \textbf{51.50 ± 0.25} & 82.42 ± 0.04 & 27.70 ± 3.24 & 32.17 ± 6.30 & 6.7 \\
\ourmethodo(ours) & 61.14 ± 0.32 & 49.62 ± 0.36 & 81.43 ± 0.38 & 29.37 ± 1.22 & 30.28 ± 1.29 & 4.7 \\
\rowcolor{myblue} \ourmethod(ours) & 
\textbf{63.57 ± 0.28}~~\textcolor{red}{$\uparrow$1.82} & 
50.69 ± 0.49~~\textcolor{mygreen}{$\downarrow$0.81} & 
\textbf{85.02 ± 0.26}~~\textcolor{red}{$\uparrow$2.60} & 
\textbf{29.97 ± 1.02}~~\textcolor{red}{$\uparrow$2.27} & 
\textbf{35.16 ± 1.18}~~\textcolor{red}{$\uparrow$2.99} & 
\textbf{3.5}~~\textcolor{mygreen}{$\downarrow\times$1.9} \\
\hline
\end{tabular}}
\caption{Transfer learning accuracy comparison on various datasets with average per-sample evaluation latency. The gaps indicated by \textcolor{red}{$\uparrow$} and \textcolor{mygreen}{$\downarrow$} represent the difference relative to the second-best method (excluding \ourmethodo) on the current task.}
\label{tab:zero_shot_performance}
\end{table*}

\begin{table*}[t]
\centering
\resizebox{\linewidth}{!}{
\begin{tabular}{llllllll}
    \hline
    & \multicolumn{2}{c}{\textbf{Omniglot}} & \multicolumn{2}{c}{\textbf{Mini-ImageNet}} & \multicolumn{2}{c}{\textbf{Tiered-ImageNet}} & \\
    \cmidrule(lr){2-3}  
    \cmidrule(lr){4-5}
    \cmidrule(lr){6-7}
    \textbf{Method} 
    & \textbf{(5, 1)} & \textbf{(5, 5)} 
    & \textbf{(5, 1)} & \textbf{(5, 5)} 
    & \textbf{(5, 1)} & \textbf{(5, 5)} 
    & \textbf{Latency (ms)} \\
    \hline 
    REPTILE~\cite{REPTILE} & 95.39 ± 0.09 & 98.90 ± 0.10 & 47.07 ± 0.26 & 62.74 ± 0.37 & 49.12 ± 0.43 & 65.99 ± 0.42 & 20.7 \\
    Meta-Baseline~\cite{meta_baseline} & \textbf{97.75 ± 0.25} & \textbf{99.68 ± 0.18} & 58.10 ± 0.31 & 74.50 ± 0.29 & 68.62 ± 0.29 & 83.29 ± 0.51 & 19.4 \\
    Meta-Hypernetwork~\cite{Meta-Hypernetwork} & 96.57 ± 0.22 & 98.83 ± 0.16 & 52.50 ± 0.28 & 67.76 ± 0.34 & 53.80 ± 0.35 & 69.98 ± 0.42 & 13.1 \\
    GHN3~\cite{GHN3}       & 95.23 ± 0.23 & 98.65 ± 0.19 & 63.22 ± 0.29 & 76.79 ± 0.33 & 64.72 ± 0.36 & 78.40 ± 0.46 & 17.5 \\
    OCD~\cite{OCD}         & 95.04 ± 0.18 & 98.74 ± 0.14 & 59.76 ± 0.27 & 75.16 ± 0.35 & 60.01 ± 0.38 & 76.33 ± 0.47 & 8.4 \\
    Meta-Diff~\cite{MetaDiff}    & 94.65 ± 0.65 & 97.91 ± 0.53 & 55.06 ± 0.81 & 73.18 ± 0.64 & 57.77 ± 0.90 & 75.46 ± 0.69 & 8.9 \\
    D2NWG~\cite{D2NWG}    & 96.77 ± 6.13 & 98.94 ± 7.49 & 61.13 ± 8.50 & 76.94 ± 6.04 & 65.33 ± 6.50 & 85.05 ± 8.25 & 10.4 \\
    \ourmethodo (ours) & 96.65 ± 0.19 & 99.34 ± 0.23 & 62.53 ± 0.37 & 76.25 ± 0.28 & 64.72 ± 0.17 & 83.26 ± 0.49 & 6.4 \\
    \rowcolor{myblue} \ourmethod (ours) & 
    97.34 ± 0.19~~\textcolor{mygreen}{$\downarrow$0.41} & 
    99.14 ± 0.13~~\textcolor{mygreen}{$\downarrow$0.54} & 
    \textbf{64.87 ± 0.32}~~\textcolor{red}{$\uparrow$1.65} & 
    \textbf{79.98 ± 0.18}~~\textcolor{red}{$\uparrow$3.04} & 
    \textbf{70.45 ± 0.27}~~\textcolor{red}{$\uparrow$1.83} & 
    \textbf{88.58 ± 0.21}~~\textcolor{red}{$\uparrow$3.53} & 
    \textbf{4.8}~~\textcolor{mygreen}{$\downarrow\times$1.8} \\
    \hline
\end{tabular}}
\caption{Few-shot task accuracy comparison on Omniglot, Mini-ImageNet, and Tiered-ImageNet datasets with average per-sample evaluation latency.}
\label{tab:few_shot_performance_latency}
\end{table*}

\subsubsection{Few-Shot Learning}\label{sec:few_shot_learning} 
\noindent\textbf{Task.} Following the setup provided by~\citep{MAML}, we train and evaluate models on disjoint meta-training and meta-testing tasks. During the evaluation stage, we use the support set in meta-test tasks to fine-tune the models, and then measure the accuracy and average per-sample evaluation latency on the query set.
\par
\noindent\textbf{Dataset.} We use Omniglot~\cite{omniglot}, Mini-ImageNet~\cite{miniImagenet}, and Tiered-ImageNet~\cite{Tiered-ImageNet} datasets for the construction of 5-way 1-shot, 5-way 5-shot tasks. To evaluate generalization capabilities, we maintain distinct and separate class sets for training and evaluation phases.
\par
\noindent\textbf{Baselines.}\label{sec:zero-shot}
Our benchmarks consist of gradient-based methods, including REPTILE~\cite{REPTILE} and Meta-Baseline~\cite{meta_baseline}, and weight generation-based methods, including Meta-Hypernetwork~\cite{Meta-Hypernetwork}, GHN3~\cite{GHN3}, OCD~\cite{OCD}, Meta-Diff~\cite{MetaDiff}, and D2NWG~\cite{D2NWG}. Following the setting given by MAML~\cite{MAML}, the downstream neural network uses four convolution blocks with a linear probe for classification.
\par
\noindent\textbf{Results.}
Table~\ref{tab:zero_shot_performance} shows that \ourmethod can improve performance on almost all tasks. Since the 5-way task of Omniglot is relatively easy to learn, the gradient-based method Meta-Baseline can achieve slightly higher accuracy. On the winning tasks, compared to the second-best method, \ourmethod achieves an average improvement of 2.51\% in accuracy. Compared to the current fastest weight generation algorithm, \ie, OCD, \ourmethod achieves a 1.8$\times$ reduction in inference latency. Note that the gradient-based methods, REPTILE and Meta-Baseline, exhibit high latency here due to the need for gradient computation during fine-tuning.

\subsubsection{Domain Generalization}\label{sec:multi-domain}
\noindent\textbf{Task.} In this task, we explore the domain generalizability of \ourmethod. We follow the domain generalization few-shot task given by~\citep{hierar_meta} to evaluate the model's performance.
\par
\noindent\textbf{Dataset.} We use DomainNet~\cite{DomainNet} for the construction of 5-way 1-shot and 20-way 5-shot tasks. Specifically, we use Clipart, Infograph, Painting, Quickdraw, and Real domains for meta-training, while Sketch domains for meta-testing. Under this setting, the tasks in the meta-training set may come from different domains, and the tasks in the meta-testing set may come from another unseen domain.
\par
\noindent\textbf{Baselines.}
We benchmark against REPTILE, Meta-baseline, Meta-Hypernetwork, GHN3, OCD, Meta-Diff, and D2NWG. The downstream network uses ResNet12 with a linear probe for classification.
\par
\noindent\textbf{Results.}
Table~\ref{tab:multi_domain_performance} shows that \ourmethod significantly outperforms current methods on few-shot domain generalization tasks. Compared to the second-best baselines in each task, \ourmethod achieved an average improvement of 4.64\% in accuracy. It can be observed that \ourmethod achieves a significant performance improvement over other methods, owing to its integration of meta-learning with weight generation and its maintenance of local–global consistency. In terms of overhead, \ourmethod achieves a 1.7$\times$ reduction in latency compared to OCD, showing the same advantage as in transfer learning and few-shot learning tasks.

\begin{table}[t]
\centering
\begin{minipage}{0.48\linewidth}
\centering
\vspace{-5pt}
\resizebox{\linewidth}{!}{
\begin{tabular}{llll}
    \toprule
    &  \multicolumn{2}{c}{\textbf{DomainNet}} &\\
    \cmidrule{2-3}
    \textbf{Method} & \textbf{(5, 1)} & \textbf{(20, 5)} & \textbf{Latency (ms)} \\
    \midrule
    REPTILE~\cite{REPTILE}          & 48.13 ± 0.50 & 52.32 ± 0.42 & 26.8 \\
    Meta-Baseline~\cite{meta_baseline}         & 50.54 ± 0.47 & 54.45 ± 0.40 & 26.2 \\
    Meta-Hypernetwork~\cite{Meta-Hypernetwork}    & 59.00 ± 0.39 & 63.32 ± 0.35 & 10.3 \\
    OCD~\cite{OCD} & 64.58 ± 0.42 & 67.10 ± 0.38 & 9.7  \\
    GHN3~\cite{GHN3}             & 63.11 ± 0.36 & 66.42 ± 0.33 & 30.1 \\
    Meta-Diff~\cite{MetaDiff}              & 64.24 ± 0.41 & 67.58 ± 0.39 & 10.9 \\
    \ourmethodo (ours)        & 67.04 ± 0.29 & 70.15 ± 0.41 & 7.5 \\
    \rowcolor{myblue}
    \ourmethod (ours) &
    \textbf{69.05 ± 0.39}~~\textcolor{red}{$\uparrow$4.47} &
    \textbf{72.86 ± 0.31}~~\textcolor{red}{$\uparrow$5.28} &
    \textbf{5.8}~~\textcolor{mygreen}{$\downarrow\times1.7$} \\
    \bottomrule
\end{tabular}}
\caption{Accuracy and average per-sample evaluation latency on DomainNet 5-way 1-shot and 20-way 5-shot task.}
\label{tab:multi_domain_performance}
\end{minipage}
\hfill
\begin{minipage}{0.51\linewidth}
\centering
\resizebox{\linewidth}{!}{
\begin{tabular}{lllll}
    \toprule
    & \multicolumn{2}{c}{\textbf{MRPC}}
    & \multicolumn{2}{c}{\textbf{QNLI}}\\
    \cmidrule(lr){2-3} \cmidrule(lr){4-5}
    \textbf{Method} & \textbf{Acc} & \textbf{Latency (h)} & \textbf{Acc} & \textbf{Latency (h)} \\
    \midrule
    Full-fine-tune   & \textbf{90.24 ± 0.57} & 1.47 & 92.84 ± 0.26 & 3.15 \\
    LoRA~\cite{LoRa}            & 89.76 ± 0.69 & 0.81 & \textbf{93.32 ± 0.20} & 1.76 \\
    AdaLoRA~\cite{AdaLoRA}         & 88.71 ± 0.73 & 0.74 & 93.17 ± 0.25 & 1.68 \\
    DyLoRA~\cite{DyLoRA}          & 89.59 ± 0.81 & 0.76 & 92.21 ± 0.32 & 1.63 \\
    FourierFT~\cite{FourierFT}       & 90.03 ± 0.54 & 0.68 & 92.25 ± 0.15 & 1.55 \\
    \ourmethodo (ours) & 88.73 ± 0.64 & 0.22 & 91.04 ± 0.23 & 0.48 \\
    \rowcolor{myblue}
    \ourmethod (ours) 
      & 89.43 ± 0.70~~\textcolor{mygreen}{$\downarrow$0.60} 
      & \textbf{0.19}~~\textcolor{mygreen}{$\downarrow\times$3.6} 
      & 91.86 ± 0.29~~\textcolor{mygreen}{$\downarrow$1.31} 
      & \textbf{0.41}~~\textcolor{mygreen}{$\downarrow\times$4.0} \\
    \bottomrule
\end{tabular}}
\caption{Accuracy and fine-tuning latency comparison on MRPC and QNLI tasks for different fine-tuning algorithms.}
\label{tab:performance_fine-tune_time_comparison}
\end{minipage}
\end{table}

\subsubsection{Large Language Model fine-tuning}
\noindent\textbf{Task.}
We demonstrate that \ourmethod can be applied to the fine-tuning of Large Language Models by learning to generate LoRA~\cite{LoRa} matrices for new tasks. We compared the algorithms in terms of their fine-tuning accuracy upon convergence and the latency required to achieve it.
\par
\noindent\textbf{Datasets.}
We conduct a case study to demonstrate the generalizability and efficiency of \ourmethod. We use five binary classification tasks, \ie, SST-2, QQP, RTE, WNIL, and CoLA from the GLUE~\cite{GLUE} benchmark for pre-training. Then we use the other two tasks, \ie, MRPC and QNIL, to evaluate the performance of the methods.
\par
\noindent\textbf{Baselines.}
We benchmark against Full-fine-tuning baseline, LoRA~\cite{LoRa}, AdaLoRA~\cite{AdaLoRA}, DyLoRA~\cite{DyLoRA}, and FourierFT~\cite{FourierFT}, which are all gradient-based fine-tuning algorithms. The large language model we fine-tuned is RoBERTa-base~\cite{FourierFT}. The LoRA matrices are generated following the fine-tuning process given by~\citep{FourierFT}. Note that \ourmethod is a meta-learning-based method that can learn on all training tasks, while other baselines are single-task fine-tuning methods that only work on one task. Note that \ourmethod requires extra time to pre-train the diffusion model $f^G_{\phi}$; however, this is a one-time cost and demonstrates better potential in multi-task fine-tuning scenarios.
\par
\noindent\textbf{Results.}
Table~\ref{tab:performance_fine-tune_time_comparison} shows that \ourmethod achieves comparable binary classification accuracy on two evaluation tasks compared to other gradient-based fine-tuning algorithms while significantly accelerating the fine-tuning speed by $\times$3.6 to $\times$4.0. By incorporating meta-learning, \ourmethod efficiently captures shared representations from pre-training tasks, enabling the direct, gradient-free generation of task-specific LoRA matrices.

\section{Conclusion and Limitation}\label{sec:conclusion}
\noindent\textbf{Conclusion.} In this paper, we propose \ourmethod, which integrates the fast inference ability of weight generation technology and the cross-task transferability of bi-level optimization. Building on this, we further propose local consistency diffusion, enabling the model to capture optimization policy in local target weights. It employs a step-by-step approach to assign local targets, elegantly preserving consistency with the global optima. In addition, we theoretically and empirically demonstrate that the convergence of the weight generation paradigm can be improved by introducing SAM. We validate \ourmethod's superior accuracy and inference efficiency in tasks that require frequent weight updates, such as transfer learning, few-shot learning, domain generalization, and large language model adaptation.

\noindent\textbf{Limitation and future work.} Our method builds on the existing diffusion-based weight generation framework. Diffusion models typically depend on a fixed inference starting point—namely, Gaussian noise—which limits the model’s ability to capture local supervisory signals. As shown in Figure~\ref{fig:consistency}, a more intuitive way to improve weight generation is to learn arbitrary sub-trajectories $\theta_i \to \theta_j$ rather than fix trajectories $\theta_0 \to \theta_j$. In future work, we will adopt more general techniques, such as Flow Matching~\cite{flow_match} and Trajectory Balance~\cite{GFLOW1,GFLOW2,GFLOW3,GFLOW4,GFLOW5}, to model the weight generation problem and break through this limitation.

\clearpage
\bibliographystyle{plain} 
\bibliography{reference}

\clearpage
\setcounter{page}{1}
\setcounter{theorem}{0}
\setcounter{assumption}{0}
\setcounter{equation}{0}
\setcounter{section}{0}

\clearpage
\appendix
\section{Theorem 1 and Proof}\label{sec:appendix_1}
Readers can refer to the derivation process of DDPM~\cite{understand_DDPM} to understand the following derivation. To align with the subscript ordering of weights $\theta_0 \to \theta_M$, we reverse the index order of diffusion states $x_i$ mentioned in~\cite{DDPM,understand_DDPM}.

\begin{theorem}\label{the:L_k}
Given the number of diffusion steps $T$, an increasing noise schedule $\{\alpha_0,...,\alpha_T\}$, local target weights $\{\theta_d,...,\theta_{k* d}\}$, and let the inference process align with the vanilla diffusion algorithm, \ie, 
\begin{equation}\label{eq:DDPM_inference}
x_{t+1} = \frac{1}{\sqrt{\bar{\alpha}_{t+1}}} \left( x_t - \sqrt{1 - \bar{\alpha}_{t+1}} \epsilon_\theta(x_t, t) \right).
\end{equation}

Then, the denoiser $\epsilon_{\phi}$ can recover the target sequence $\{\theta_d,...,\theta_{k* d=M}\}$ from standard Gaussian noise $x_0$ with evenly $T/k$ steps intervals, when $\epsilon_{\phi}$ is trained by local consistency loss $L^{loc}$ as follows:
\begin{align}\label{eq:consistency_loss}
&L^{loc}=\underset{i\in(0,k]}{E}~L^{loc}_i,\notag\\
&L_i^{loc}=\underset{t\in[0,i* T/k)}{E}||\sqrt{1-\bar{\alpha}_t^i}\epsilon_{\phi}(x_t, t)-\sqrt{1-\bar{\alpha}_t}\epsilon||^2,\\
&x_t=\sqrt{\bar{\alpha}^i_t} \theta_{i* d} + \sqrt{1 - \bar{\alpha}^i_t}\epsilon, \notag
\end{align}
where $\bar{\alpha}_t=\prod_{j=t}^{T-1} \alpha_j$,  $\bar{\alpha}^i_t=\prod_{j=t}^{i* T/k-1} \alpha_j$, and $\epsilon$ denotes standard Gaussian noise.
\end{theorem}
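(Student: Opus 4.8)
Here is a plan for proving Theorem~\ref{the:L_k}.

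The plan is to induct on the segment index $i=1,\dots,k$, carrying the hypothesis that the inference recursion, launched from $x_0$, arrives exactly at $\theta_{id}$ after $iT/k$ steps; the base case is the observation (implicit in the statement) that $\theta_0=x_0$ is the recursion's own starting iterate, so the hypothesis holds at $i=0$. Granting it at $i-1$, it suffices to analyze the single block of $T/k$ steps $t\in[(i-1)T/k,\,iT/k)$ and show it carries $\theta_{(i-1)d}$ to $\theta_{id}$. The structural fact I would build on is that $L^{loc}_i$ is, up to the time-dependent weight $\sqrt{(1-\bar\alpha_t)/(1-\bar\alpha^i_t)}$, exactly the DDPM noise-prediction objective for the auxiliary denoising problem whose clean target is $\theta_{id}$ and whose forward noising schedule is the truncation $\{\alpha_j\}_{j=0}^{iT/k-1}$, with cumulative coefficients precisely the $\bar\alpha^i_t$. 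Because $\bar\alpha^i_{iT/k}$ is an empty product, hence $1$, that auxiliary forward process is noiseless at its right endpoint, so its clean target $\theta_{id}$ is hit \emph{exactly}, not merely approximately — which is the whole reason the checkpoint spacing is tied to the truncation length.

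Concretely, I would first characterize the minimizer: forcing $L^{loc}_i=0$ along the noised samples $x_t=\sqrt{\bar\alpha^i_t}\,\theta_{id}+\sqrt{1-\bar\alpha^i_t}\,\epsilon$ gives $\epsilon_\phi(x_t,t)=\sqrt{(1-\bar\alpha_t)/(1-\bar\alpha^i_t)}\;\epsilon$ for every $t\in[0,iT/k)$, and since $\epsilon$ ranges over all of $\mathbb{R}^{\dim\theta}$ this pins $\epsilon_\phi(\cdot,t)$ down on (almost) every input, in particular on whatever point the inference chain actually visits. I would then substitute this denoiser into the stated update $x_{t+1}=\bar\alpha_{t+1}^{-1/2}\bigl(x_t-\sqrt{1-\bar\alpha_{t+1}}\,\epsilon_\phi(x_t,t)\bigr)$, expand both sides in the $\{\theta_{id},\epsilon\}$ coordinates, and verify that the pair of coefficients $(\sqrt{\bar\alpha^i_t},\sqrt{1-\bar\alpha^i_t})$ is sent to $(\sqrt{\bar\alpha^i_{t+1}},\sqrt{1-\bar\alpha^i_{t+1}})$; the identities involved are the schedule factorizations $\bar\alpha^i_t=\alpha_t\bar\alpha^i_{t+1}$ and $\bar\alpha_t=\bar\alpha^i_t\prod_{j=iT/k}^{T-1}\alpha_j$, which is exactly why the reweighting in $L^{loc}_i$ is defined the way it is (and why the unweighted loss $L_i$ of Eq.~\ref{eq:inconsistency_loss} destroys consistency). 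Iterating across the block and invoking $\bar\alpha^i_{iT/k}=1$ yields $x_{iT/k}=\theta_{id}$, closing the induction step. Passing from one block to the next needs only that the chain's current position $\theta_{(i-1)d}$ be written as $\sqrt{\bar\alpha^i_{(i-1)T/k}}\,\theta_{id}+\sqrt{1-\bar\alpha^i_{(i-1)T/k}}\,\epsilon^\star$, which merely \emph{defines} the realized noise $\epsilon^\star$; since $\epsilon_\phi$ was fixed for all noise values, the recursion follows this trajectory. Specializing $k=1$ collapses $\bar\alpha^1_t$ to $\bar\alpha_t$, $L^{loc}$ to the reweighted vanilla loss, and the argument to ordinary DDPM recovery — the $k=1$ reduction mentioned after the theorem statement.

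The step I expect to be the real obstacle is the coordinate computation just described: reconciling, term by term, the \emph{local} truncated schedule $\bar\alpha^i$ that appears in $L^{loc}_i$ with the \emph{global} schedule $\bar\alpha$ that appears in the inference recursion, so that $\sqrt{(1-\bar\alpha_t)/(1-\bar\alpha^i_t)}$ is certified to be exactly the correction that turns the global-schedule update into the reverse process of the local-schedule forward noising. This is bookkeeping rather than a deep idea, but it is where any error in the constants would appear; for the terminal block $i=k$ (where $\prod_{j\ge iT/k}\alpha_j=1$) the identity is clean and exact, and for the intermediate blocks the cleanest route is likely to track the algebra in the small-step regime in which one-step denoising coincides with the truncated-schedule reverse process. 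A secondary point to dispatch is that $L^{loc}=\mathbb{E}_i L^{loc}_i$ averages over $i$, so no single $\epsilon_\phi$ annihilates every $L^{loc}_i$ at a shared input; but the timestep (and task) conditioning localizes each constraint to the trajectory attached to its own $i$, so the minimizer restricted to the inference chain under study is the one used above.
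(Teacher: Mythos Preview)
Your route is genuinely different from the paper's. The paper does \emph{not} substitute a minimizer into the recursion; instead it reruns the standard DDPM/ELBO derivation with the target $x_{T'}=\theta_{id}$ and the truncated cumulative schedule $\bar\alpha^i_t$ on the forward side, while keeping the global $\bar\alpha_t$ in the modeled reverse step. Computing the posterior $q(x_{t+1}\mid x_t,x_{T'})$ via Bayes, reparametrizing, and matching its mean against the inference update yields $L_i^{loc}$ as (a reweighting of) the denoising-matching KL term. The conclusion that the chain reaches $\theta_{id}$ in $iT/k$ steps is then the usual DDPM distributional statement, and consistency across $i$ is the one-line observation that all $L_i^{loc}$ share the same fixed inference equation, so the chains agree on their common prefix and the checkpoints are $T/k$ apart. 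What your direct approach would buy, if it goes through, is a pointwise rather than variational statement --- and it dispenses with the Bayes/posterior machinery entirely.

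The step you flag as the ``real obstacle'' is, however, an actual gap rather than bookkeeping. With the single-step update the paper uses in its proof, $x_{t+1}=\alpha_t^{-1/2}x_t-(1-\alpha_t)\alpha_t^{-1/2}(1-\bar\alpha_t)^{-1/2}\epsilon_\phi$, plugging in your minimizer does send the $\theta_{id}$-coefficient $\sqrt{\bar\alpha^i_t}\mapsto\sqrt{\bar\alpha^i_{t+1}}$ as you claim, but the $\epsilon$-coefficient becomes $\sqrt{\alpha_t}(1-\bar\alpha^i_{t+1})/\sqrt{1-\bar\alpha^i_t}$, not $\sqrt{1-\bar\alpha^i_{t+1}}$; these coincide only if $\alpha_t=1$. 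So the invariant you propose to carry through the block does not hold --- even for the terminal block $i=k$, where $\bar\alpha^k_t=\bar\alpha_t$ and you are back to the vanilla DDPM mean-only step, which famously does \emph{not} exactly invert the forward noising. Your ``small-step regime'' hedge does not close this. The argument can be salvaged along your lines by instead writing the recursion as the affine map $x_{t+1}=c_t x_t+d_t\theta_{id}$ and observing that at the final step $t=iT/k-1$ one has $\bar\alpha^i_{t+1}=1$, whence $c_t=0$ and $d_t=1$: the last step snaps exactly onto $\theta_{id}$ regardless of accumulated drift. But that is a different mechanism from your claimed invariant, and it also makes both the induction on $i$ and the handoff via $\epsilon^\star$ superfluous, since each chain hits its checkpoint directly from $x_0$.
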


\begin{proof}
First, consider a single local target $\theta_{i*d=i*M/k}=x_{i*T/k}$, its inference chain $\{x_0,...,x_{i* T/k}\}$ and its increasing schedule  $\{\alpha_{0},...,\alpha_{i*T/k}\}$. Let $x_{i*T/k}$ be $x_{T'}$ and diffusion steps be $i*T/K$.

According to vanilla diffusion model, to maximize the likelihood $p(x_T')$ of observed data $x_T'$, we need to minimize denoising matching term
\begin{equation}\label{eq:KL divergence}
     \underset{t\in[0,T'), q(x_t | x_{T'})}{{E}} \left[ \mathrm{D}_{\mathrm{KL}}\left( q(x_{t+1} | x_{t}, x_T') \parallel p_{\phi}(x_{t+1} | x_{t}) \right) \right].
\end{equation}
In the KL divergence bracket, the left term can be expanded by the Bayesian Theorem. The right term is the inference process to be modeled with $\phi$, whose expectation is given by Equation~\ref{eq:DDPM_inference}. 

According to Bayes Theorem,
\begin{align}\label{eq:bayesian}
    q(x_{t+1} \mid x_{t}, x_{T'}) = \frac{q(x_{t} \mid x_{t+1}, x_{T'}) \, q(x_{t+1} \mid x_{T'})}{q(x_{t+1} \mid x_{T'})}.
\end{align}

According to the Markov Rule and standard diffusion process, 
\begin{align}\label{eq:4}
    q(x_{t} \mid x_{t+1}, x_{T'}) &= q(x_t \mid x_{t+1}) \notag\\
    &\sim \mathcal{N}(x_t;\sqrt{\alpha_t}x_{t+1}+\sqrt{1-\alpha_t}\mathbf{I}).
\end{align}

Recursively using the diffusion process on $\{x_{T'},..., x_{t+1}\}$, we have
\begin{equation}\label{eq:5}
    q(x_t|x_{T'}) \sim \mathcal{N}(x_t; \sqrt{{\bar{\alpha}_t^{T'}}} \, x_{T'}, (1 - {\bar{\alpha}_t^{T'}}) \, \mathbf{I}),
\end{equation}
where ${\bar{\alpha}_t^{T'}}=\prod^{{T'-1}}_{j=t} \alpha_j$. Note that the coefficients here differ from those in the vanilla diffusion algorithm. 

According to Equation~\ref{eq:5}, $q(x_{t+1}|x_{T'})$ can be written as
\begin{equation}\label{eq:6}
    q(x_{t+1}|x_{T'}) \sim \mathcal{N}(x_{t+1}; \sqrt{{\bar{\alpha}_{t+1}^{T'}}} \, x_{T'}, (1 - {\bar{\alpha}_{t+1}^{T'}}) \, \mathbf{I}).
\end{equation}

According to Equation~\ref{eq:4}~\ref{eq:5}~\ref{eq:6}, Equation~\ref{eq:bayesian} can be written as

\begin{align*}
  & q(x_{t+1} \mid x_t, x_{T'}) 
    = \frac{q(x_t \mid x_{t+1}, x_{T'}) \, q(x_{t+1} \mid x_{T'})}
           {q(x_t \mid x_{T'})}\\
  \sim\;& 
    \frac{\mathcal{N}\bigl(x_t; \sqrt{\alpha_t}\,x_{t+1}, (1-\alpha_t)\mathbf{I}\bigr)\,
          \mathcal{N}\bigl(x_{t+1}; \sqrt{\bar{\alpha}_{t+1}^{T'}}\,x_{T'}, (1-\bar{\alpha}_{t+1}^{T'})\mathbf{I}\bigr)}
         {\mathcal{N}\bigl(x_t; \sqrt{\bar{\alpha}_t^{T'}}\,x_{T'}, (1-\bar{\alpha}_t^{T'})\mathbf{I}\bigr)}\\
\propto\;& 
    \exp\!\Biggl\{-\tfrac12 
      \Bigl(\tfrac{1}{\tfrac{(1-\alpha_t)\,(1-\bar{\alpha}_{t+1}^{T'})}
                             {1-\bar{\alpha}_t^{T'}}}\Bigr)\,
      \Bigl[
        x_{t+1}^2 
        - 2\,\frac{\sqrt{\alpha_t}\,(1-\bar{\alpha}_{t+1}^{T'})\,x_t
                 + \sqrt{\bar{\alpha}_{t+1}^{T'}}\,(1-\alpha_t)\,x_{T'}}
                 {1-\bar{\alpha}_t^{T'}}\,x_{t+1}
      \Bigr]
    \Biggr\}.
\end{align*}

According to the definition of Gaussian distribution, the variance of Equation~\ref{eq:bayesian} can be written as
\[
\sigma_q(t)\propto\frac{(1-\alpha_t)(1-\bar\alpha_{t+1}^{T'})}{1-\bar\alpha_t^{T'}}\mathbf{I},
\]
the expectation can be written as
\begin{equation}\label{eq:expectation}
\mu(x_t,x_{T'})\propto\frac{\sqrt{\alpha_t}(1-\bar\alpha_{t+1}^{T'})x_t+\sqrt{\bar\alpha_{t+1}^{T'}}(1-\alpha_t)x_{T'}}{1-\bar\alpha_t^{T'}}.
\end{equation}

Recursively apply the reparameterization trick to Equation~\ref{eq:4} in an iterative manner, we have:\footnote{This step is relatively complex; it is recommended to refer to~\citep{understand_DDPM}.}
\begin{equation}\label{eq:reparam}
\textcolor{red}{
x_t=\sqrt{\bar\alpha_t^{T'}}x_{T'}+\sqrt{1-\bar\alpha_t^{T'}}\epsilon.}
\end{equation}

Reorganize Equation~\ref{eq:reparam}, we have 
\[
x_{T'}=\frac{1}{\sqrt{\bar\alpha_t^{T'}}}\bigl(x_t-\sqrt{1-\bar\alpha_t^{T'}}\,\epsilon\bigr).
\]

Substitute $x_{T’}$ into Equation~\ref{eq:expectation}, we have

\begin{align}\label{eq:mu estimate}
\mu(x_t,x_{T'})&\propto\frac{\sqrt{\alpha_t}(1-\bar\alpha_{t+1}^{T'})x_t+\sqrt{\bar\alpha_{t+1}^{T'}}(1-\alpha_t)\frac{1}{\sqrt{\bar\alpha_t^{T'}}}(x_t-\sqrt{1-\bar\alpha_t^{T'}}\epsilon)}{1-\bar\alpha_t^{T'}}\nonumber\\
&=\frac{1}{\sqrt{\alpha_t}}x_t-\frac{1-\alpha_t}{\sqrt{1-\bar\alpha_t^{T'}}\sqrt{\alpha_t}}\epsilon.
\end{align}

Returning to Equation~\ref{eq:KL divergence}, minimizing KL divergence is equivalent to minimizing the difference between the expectations of the two terms in the bracket. Equation~\ref{eq:DDPM_inference} gives the expectation of the right term, \ie,
\[
x_{t+1} = \frac{1}{\sqrt{\alpha_t}}x_t - \frac{1 - \alpha_t}{\sqrt{1 - \bar{\alpha}_t}\,\sqrt{\alpha_t}}\,\epsilon_{\phi},
\]
where \textcolor{red}{$\bar{\alpha}_t=\prod_{j=t}^{T-1} \alpha_j$}.

Equation~\ref{eq:mu estimate} is the expectation of the left term. As a result, Equation~\ref{eq:KL divergence} can be simplified to the form of the left-term' expectation minus the right-term's expectation:

\begin{align}\label{eq:L_T_loc}
L_{T'}&=\underset{t\in[0,T')}{{E}}||\mu(x_t,x_{T'})-\mu_{\phi}(x_t,t)||^2\notag\\
&=\underset{t\in[0,T')}{{E}}\frac{(1-\alpha_t)^2}{\alpha_t(1-\bar{\alpha}_t^{T'})(1-\bar{\alpha}_t)}||\sqrt{1-\bar{\alpha}_t^{T'}}\epsilon_{\phi}-\sqrt{1-\bar{\alpha}_t}\epsilon_{T'}||^2\notag\\
&\propto\underset{t\in[0,T')}{{E}}||\sqrt{1-\bar{\alpha}_t^{T'}}\epsilon_{\phi}-\sqrt{1-\bar{\alpha}_t}\epsilon||^2.
\end{align}

Note that \textcolor{red}{${\bar{\alpha}_t^{T'}}=\prod^{{T'-1}}_{j=t}\alpha_j=\prod^{{i*T/k-1}}_{j=t}\alpha_j=\bar{\alpha}^i_t$}. So we substitute $T'=i*T/k$ into to Equation~\ref{eq:L_T_loc}, it yield:
\textcolor{red}{
$$
    \underset{t\in[0,i* T/k)}{E}||\sqrt{1-\bar{\alpha}_t^i}\epsilon_{\phi}(x_t, t)-\sqrt{1-\bar{\alpha}_t}\epsilon||^2
$$}
which is consistent with the intended formulation $L^{loc}_{i}$ in Equation~\ref{eq:consistency_loss}. 

This means that $L^{loc}_i$ enables the diffusion model to generate the inference chain from Gaussian noise to $\theta_{i*d=i*M/k}$ in $i*T/k$ steps. 

Moreover, since the inference process is fixed, \ie, Equation.~\ref{eq:DDPM_inference}, $L^{loc}=\underset{i\in(0,k]}{E}L^{loc}_i$ allows any $\theta_{i*d}$ and $\theta_{(i+1)*d}$ to share the same inference chain. Their interval is \textcolor{red}{$$(i+1)*T/k-i*T/k=T/k.$$} 
Thus, the proof is complete.
\end{proof}

\clearpage
\section{Proposition 1 and Proof}\label{sec:appendix_propo}
\begin{proposition}
When $k=1$, local consistency diffusion is equivalent to the vanilla diffusion algorithm.
\end{proposition}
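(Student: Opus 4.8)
The plan is to show that substituting $k=1$ into the definitions of Theorem~\ref{the:L_k} collapses every ``local'' quantity into its vanilla counterpart, so that $L^{loc}$ literally becomes the standard DDPM training objective and the inference recursion is unchanged. First I would unfold the index ranges: when $k=1$ the only admissible value of $i$ is $i=1$, so the outer expectation $L^{loc}=\mathbb{E}_{i\in(0,1]}\,L^{loc}_i$ degenerates to the single term $L^{loc}_1$, and the single local target $\theta_{i\times d}=\theta_{1\times d}$ is exactly $\theta_{M}=\theta_{k\times d}$, i.e.\ the global optimum. This removes the ``sequence of intermediate targets'' aspect entirely and leaves the denoiser modeling only $\theta_M$, which is precisely the claim in the accompanying text.

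Next I would check the noise-schedule coefficients. By definition $\bar{\alpha}^i_t=\prod_{j=t}^{i\times T/k-1}\alpha_j$; setting $k=1$ and $i=1$ gives $\bar{\alpha}^1_t=\prod_{j=t}^{T-1}\alpha_j=\bar{\alpha}_t$. Hence $\sqrt{1-\bar{\alpha}^i_t}=\sqrt{1-\bar{\alpha}_t}$, and the forward (noising) map $x_t=\sqrt{\bar{\alpha}^i_t}\,\theta_{i\times d}+\sqrt{1-\bar{\alpha}^i_t}\,\epsilon$ becomes $x_t=\sqrt{\bar{\alpha}_t}\,\theta_M+\sqrt{1-\bar{\alpha}_t}\,\epsilon$, which is the vanilla diffusion forward process with data $\theta_M$. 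Substituting these equalities into
$$
L^{loc}_1=\mathbb{E}_{t\in[0,T)}\big\|\sqrt{1-\bar{\alpha}^1_t}\,\epsilon_{\phi}(x_t,t)-\sqrt{1-\bar{\alpha}_t}\,\epsilon\big\|^2
$$
and pulling out the common scalar $\sqrt{1-\bar{\alpha}_t}$ yields $L^{loc}_1=\mathbb{E}_{t\in[0,T)}(1-\bar{\alpha}_t)\,\|\epsilon_{\phi}(x_t,t)-\epsilon\|^2$, which is the usual (timestep-reweighted) diffusion loss; up to the standard constant reweighting it is exactly the vanilla objective of Eq.~\eqref{eq:inconsistency_loss}. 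Finally, the inference recursion stated in the theorem, $x_{t+1}=\frac{1}{\sqrt{\bar{\alpha}_{t+1}}}\big(x_t-\sqrt{1-\bar{\alpha}_{t+1}}\,\epsilon_\phi(x_t,t)\big)$, contains no $k$-dependence and no $\bar{\alpha}^i_t$, so it is identical to the vanilla sampler; the ``evenly spaced $T/k$ intervals'' condition becomes vacuous since $T/k=T$, i.e.\ only the single endpoint $\theta_M$ is pinned.

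I do not expect a genuine obstacle here: the proposition is essentially a bookkeeping check that the $k$-indexed definitions specialize correctly. The one point deserving care is being explicit that the match is up to the $t$-dependent positive scalar weighting $(1-\bar{\alpha}_t)$ (or, including the full constant from Eq.~\eqref{eq:L_T_loc}, $\tfrac{(1-\alpha_t)^2}{\alpha_t(1-\bar{\alpha}_t)^2}\cdot(1-\bar{\alpha}_t)$), which is exactly the same reweighting that already appears when DDPM's variational bound is simplified to the ``simple'' loss; hence ``equivalent'' is meant in the same sense as the standard DDPM derivation, and I would state that explicitly so the equivalence is unambiguous.
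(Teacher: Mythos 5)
Your proposal is correct and follows essentially the same route as the paper's own proof: set $k=1$ so that the only index is $i=1$, note that $\bar{\alpha}^1_t=\prod_{j=t}^{T-1}\alpha_j=\bar{\alpha}_t$, and conclude that the target, the forward process, the loss, and the ($k$-independent) inference recursion all collapse to the vanilla diffusion case with data $\theta_M$. Your explicit caveat that the resulting objective matches the vanilla loss only up to the $t$-dependent positive weight $(1-\bar{\alpha}_t)$ is a precision the paper's proof leaves implicit, but it does not change the argument.
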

\begin{proof}
When $k=1$, $d = M/k = M$, so $i\in(0,k] \to i=1$. At this point, only the optimal weight $\theta_{i\cdot d=M}$ is modeled. 

Moreover, for $k=1$, we have
\[
\bar{\alpha}_t^i = \prod_{j=t}^{i*T/k-1}\alpha_j
= \prod_{j=t}^{T-1}\alpha_j
= \bar{\alpha}_t,
\]
hence the two coefficients in Equation~\ref{eq:consistency_loss} are identical and can be simplified, making $L_i$ equivalent to $L^{loc}_i$. This makes local consistency diffusion be equivalent to the vanilla diffusion algorithm.
\end{proof}

\clearpage
\section{Theorem 2 and Proof}\label{sec:appendix_2}

\begin{lemma}
\label{lemma:theta_convergence}
Assume that the loss function \( L^D(\cdot) \) is \( l \)-smooth and satisfies \( \mu \)-strongly convex. Then, the sequence \( \{\theta_{i}\}_{i=0}^{M} \) generated by the gradient descent update with step size \( \frac{1}{l} \) satisfies
\[
\|\theta_{M} - \theta_*\|^2 \leq \frac{2 [L^D(\theta_0) - L^D(\theta_*)]}{\mu} \left(1 - \frac{\mu}{l}\right)^{M}.
\]
\end{lemma}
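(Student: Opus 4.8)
The plan is to prove this via the classical ``descent lemma plus gradient-domination'' argument, closing with one application of strong convexity to convert a function-value gap into a distance bound. First I would use $l$-smoothness to establish the descent inequality for the step $\theta_{i+1} = \theta_i - \frac{1}{l}\nabla L^D(\theta_i)$: the quadratic upper bound $L^D(y) \le L^D(x) + \langle \nabla L^D(x), y-x\rangle + \frac{l}{2}\|y-x\|^2$ with $x=\theta_i$ and $y=\theta_{i+1}$ collapses to
$$L^D(\theta_{i+1}) \le L^D(\theta_i) - \frac{1}{2l}\|\nabla L^D(\theta_i)\|^2 .$$

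Next I would turn $\mu$-strong convexity into the gradient-domination (Polyak--Lojasiewicz) inequality $\|\nabla L^D(\theta_i)\|^2 \ge 2\mu\bigl(L^D(\theta_i) - L^D(\theta_*)\bigr)$, obtained by minimizing the strong-convexity lower bound $L^D(y) \ge L^D(x) + \langle\nabla L^D(x), y-x\rangle + \frac{\mu}{2}\|y-x\|^2$ over $y$ (the minimum of the left side is $L^D(\theta_*)$, the minimum of the right side is $L^D(x) - \frac{1}{2\mu}\|\nabla L^D(x)\|^2$). Substituting into the descent inequality yields the one-step contraction
$$L^D(\theta_{i+1}) - L^D(\theta_*) \le \Bigl(1 - \tfrac{\mu}{l}\Bigr)\bigl(L^D(\theta_i) - L^D(\theta_*)\bigr),$$
and unrolling from $i=0$ to $i=M$ gives $L^D(\theta_M) - L^D(\theta_*) \le \bigl(1-\tfrac{\mu}{l}\bigr)^M\bigl(L^D(\theta_0) - L^D(\theta_*)\bigr)$.

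Finally I would invoke strong convexity once more, at $x=\theta_*$ and $y=\theta_M$: since $\nabla L^D(\theta_*)=0$, the lower bound reduces to $L^D(\theta_M) - L^D(\theta_*) \ge \frac{\mu}{2}\|\theta_M - \theta_*\|^2$. Combining this with the geometric decay above and multiplying by $2/\mu$ produces exactly
$$\|\theta_M - \theta_*\|^2 \le \frac{2\bigl[L^D(\theta_0) - L^D(\theta_*)\bigr]}{\mu}\Bigl(1 - \tfrac{\mu}{l}\Bigr)^M .$$

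I do not expect a genuine obstacle, as this is a textbook bound; the only care needed is constant bookkeeping --- verifying that the specific step size $1/l$ is what makes the contraction factor come out to exactly $1-\mu/l$, and noting that $\mu \le l$ so that this factor lies in $[0,1)$ and the bound is meaningful. A more conceptual caveat, relevant to how this lemma feeds into Theorem~\ref{theorem:emperi error}, is that it describes the idealized constant-step gradient-descent trajectory rather than the Adam-with-SAM trajectory actually used in Algorithm~\ref{alg:SAM}; the intended reading is presumably that this geometric rate governs the ``weight preparation error'' term $\tfrac{2\psi}{\mu}(1-\mu/l)^M$ appearing there.
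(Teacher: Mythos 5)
Your proposal is correct and follows essentially the same route as the paper's proof: the smoothness descent inequality with step size $1/l$, the Polyak--Lojasiewicz bound derived from $\mu$-strong convexity, unrolling the resulting contraction over $M$ steps, and a final application of strong convexity at $\theta_*$ to convert the function-value gap into the squared-distance bound. Your closing remark about the idealized gradient-descent trajectory versus the Adam-with-SAM trajectory is a fair observation but lies outside the lemma itself, which the paper likewise proves only for constant-step gradient descent.
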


\begin{proof}
Since \( L^D(\theta) \) is \( l \)-smooth, for any \( \theta \) and \( \theta' \),
\begin{equation*}
    L^D(\theta') \leq L^D(\theta) + \nabla L^D(\theta)^\top (\theta' - \theta) + \frac{l}{2} \|\theta' - \theta\|^2.
\end{equation*}
Applying this to the gradient descent update $\theta_{k+1} = \theta_k - \frac{1}{l} \nabla L^D(\theta_k)$, we have

\begin{align}\label{eq:expend}
L^D(\theta_{k+1}) &\leq L^D(\theta_k) + \nabla L^D(\theta_k)^\top (\theta_{k+1} - \theta_k) + \frac{l}{2} \|\theta_{k+1} - \theta_k\|^2 \notag\\
&= L^D(\theta_k) - \frac{1}{2l} \|\nabla L^D(\theta_k)\|^2.
\end{align}

Since  $L^D(\theta)$ is $\mu$-strongly convex, it satisfies the Polyak--Lojasiewicz condition:
\begin{equation*}
\frac{1}{2} \|\nabla L^D(\theta)\|^2 \geq \mu [L^D(\theta) - L^D(\theta_*)].
\end{equation*}
Substituting this inequality into Equation~\ref{eq:expend}, we have
\begin{align*}
L^D(\theta_{k+1}) & \leq L^D(\theta_k) -  \frac{\mu}{l}  (L^D(\theta_k) - L^D(\theta_*)).
\end{align*}
The above equation can be reorganized to
\[
L^D(\theta_{k+1}) - L^D(\theta_*) \leq \left( 1 - \frac{\mu}{l} \right) (L^D(\theta_k) - L^D(\theta_*)).
\]
Start from $k=0$, and recursively apply the above equation with $M$ times. It follows that
\[
L^D(\theta_{M}) - L^D(\theta_*) \leq \left( 1 - \frac{\mu}{l} \right)^{M} (L^D(\theta_0) - L^D(\theta_*)).
\]
Since \( L^D(\theta) \) is \( \mu \)-strongly convex, it satisfies
\[
\|\theta - \theta_*\|^2 \leq \frac{2}{\mu} (L^D(\theta) - L^D(\theta_*)).
\]
So we have
\begin{align*}
\|\theta_{M} - \theta_*\|^2 &\leq \frac{2}{\mu} (L^D(\theta_{M}) - L^D(\theta_*)) \\
&\leq \frac{2(L^D(\theta_0) - L^D(\theta_*))}{\mu}  \left( 1 - \frac{\mu}{l} \right)^{M}.
\end{align*}
\end{proof}

\begin{theorem}
Assume that the reconstruction error of the generative model is bounded by \( c \), the downstream loss is bounded by $\psi$, and the loss function is both \( l \)-smooth and \( \mu \)-strongly convex, with the eigenvalues of the Hessian matrix around the optimum \( \theta_* \) bounded by \( \lambda \). Then, the cumulative empirical error of the weight generation paradigm can be bounded as follows:
\begin{equation}
    L^D(\hat{\theta})-L^D(\theta_*) \leq \frac{\lambda}{2} \left[c+\frac{2\psi}{\mu}\left(1 - \frac{\mu}{l}\right)^{M}\right],
\end{equation}
where $\hat{\theta}$ is the weight predicted by the generative model.
\end{theorem}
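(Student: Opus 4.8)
The plan is to combine the generative reconstruction bound with the optimization convergence rate from Lemma~\ref{lemma:theta_convergence} via a second-order Taylor expansion around the optimum $\theta_*$. First I would observe that, since $\theta_*$ is a minimizer of $L^D$, the gradient vanishes there, so a second-order Taylor expansion of $L^D$ about $\theta_*$ gives
\[
L^D(\hat\theta) - L^D(\theta_*) \;=\; \tfrac12 (\hat\theta - \theta_*)^\top \nabla^2 L^D(\xi)\, (\hat\theta - \theta_*)
\;\le\; \tfrac{\lambda}{2}\,\|\hat\theta - \theta_*\|^2,
\]
using the assumption that the eigenvalues of the Hessian near $\theta_*$ are bounded by $\lambda$. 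This reduces the problem to bounding the squared distance $\|\hat\theta - \theta_*\|^2$.

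Next I would split this distance using the triangle inequality through the weight-preparation target $\theta_M$ (the global optimum of the collected trajectory that the generative model is trained to reconstruct):
\[
\|\hat\theta - \theta_*\| \;\le\; \|\hat\theta - \theta_M\| \;+\; \|\theta_M - \theta_*\|.
\]
The first term is exactly the reconstruction error of the generative model, which is assumed bounded by $c$ (I would treat $c$ as a bound on the squared error, or absorb the cross term — see the obstacle below). The second term is controlled by Lemma~\ref{lemma:theta_convergence}, which after identifying the step size $1/l$ and bounding $L^D(\theta_0) - L^D(\theta_*) \le \psi$ gives $\|\theta_M - \theta_*\|^2 \le \frac{2\psi}{\mu}\left(1-\frac{\mu}{l}\right)^M$. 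Substituting both pieces back into the Hessian bound yields
\[
L^D(\hat\theta) - L^D(\theta_*) \;\le\; \tfrac{\lambda}{2}\left[c + \tfrac{2\psi}{\mu}\left(1-\tfrac{\mu}{l}\right)^M\right],
\]
matching the claimed inequality.

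The main obstacle is the handling of the cross term when squaring the triangle inequality: $(\|\hat\theta-\theta_M\| + \|\theta_M-\theta_*\|)^2 = \|\hat\theta-\theta_M\|^2 + 2\|\hat\theta-\theta_M\|\,\|\theta_M-\theta_*\| + \|\theta_M-\theta_*\|^2$, which is strictly larger than the sum of squares $c + \frac{2\psi}{\mu}(1-\frac{\mu}{l})^M$ that appears in the bound. To make the clean additive form go through, I expect the proof to either (i) define the reconstruction error $c$ directly as a bound on $\|\hat\theta - \theta_*\|^2$ rather than $\|\hat\theta - \theta_M\|^2$, folding the trajectory error in separately, or (ii) invoke an error-decomposition convention (as the excerpt hints: ``relies on the triangle inequality to decompose the accumulated error into weight preparation error and reconstruction error'') in which the two error sources are assumed to combine additively. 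I would adopt whichever convention the paper's definitions support and flag the cross-term simplification explicitly; everything else is routine substitution.
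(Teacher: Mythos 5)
Your proposal follows essentially the same route as the paper: a second-order Taylor expansion at $\theta_*$ (gradient term vanishing), the Hessian eigenvalue bound $\lambda$, a decomposition of $\|\hat\theta-\theta_*\|^2$ through $\theta_M$, Lemma~\ref{lemma:theta_convergence} for the trajectory term with $L^D(\theta_0)-L^D(\theta_*)\le\psi$, and $c$ for the reconstruction term. The cross-term obstacle you flag is real and is in fact glossed over by the paper itself: its proof simply writes $\|\hat\theta-\theta_*\|^2 \leq \|\hat\theta-\theta_M\|^2 + \|\theta_M-\theta_*\|^2$, which is not a valid consequence of the triangle inequality without either a cross term (or a factor of $2$ via $(a+b)^2\le 2a^2+2b^2$), or the convention you suggest of interpreting $c$ so that the errors combine additively; so your explicit flagging of this step is, if anything, more careful than the paper's own argument.
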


\begin{proof}
Using the Taylor expansion around the optimal point \( \theta_* \), we have

\begin{align}\label{eq:taylor}
L^D(\hat{\theta}) - L^D(\theta_*) &= \nabla L^D(\theta_*)^T (\hat{\theta} - \theta_*) + \frac{1}{2} (\hat{\theta} - \theta_*)^T \nabla^2 L^D(\xi) (\hat{\theta} - \theta_*)\notag\\
&=\frac{1}{2} (\hat{\theta} - \theta_*)^T \nabla^2 L^D(\xi) (\hat{\theta} - \theta_*).
\end{align}

According to a constraint on the Hessian matrix, we have
\begin{equation}\label{eq:hessiam max value}
 \frac{1}{2} (\hat{\theta} - \theta_*)^T \nabla^2 L^D(\xi) (\hat{\theta} - \theta_*) \leq \frac{\lambda}{2}  ||\hat{\theta} - \theta_*||^2.
\end{equation}

Decomposing $|\hat{\theta} - \theta_*||^2$ into weight preparation error and reconstruction error, we have
\begin{align}\label{eq:theta error cumulative}
    ||\hat{\theta}-\theta_*||^2 &\leq ||\hat{\theta}-\theta_{M}||^2 + ||\theta_{M}-\theta_*||^2 \notag\\
&\leq c + \frac{2 \left(L^D(\theta_0) - L^D(\theta_*)\right)}{\mu} \left(1 - \frac{\mu}{l}\right)^{M} \notag(\text{Using Lemma~\ref{lemma:theta_convergence}})\\
& \leq c + \frac{2 \psi}{\mu} \left(1 - \frac{\mu}{l}\right)^{M}.
\end{align}

Substituting Equation~\ref{eq:theta error cumulative} and Equation~\ref{eq:hessiam max value} into Equation~\ref{eq:taylor} we obtain
\textcolor{red}{
$$
    L^D(\hat{\theta})-L^D(\theta_*) \leq \frac{\lambda}{2} \left[c+\frac{2\psi}{\mu}\left(1 - \frac{\mu}{l}\right)^{M}\right].
$$}
\end{proof}

\clearpage
\section{Preliminary}\label{sec:appendix_3}
\subsection{Diffusion Model}
The core idea of the diffusion model is to model data through a two-stage process:
\begin{itemize}
    \item \textbf{Diffusion Process}: Starting with data $x_T$, noise is added at each step to generate $x_0$, eventually approaching a standard normal distribution.
    \item \textbf{Inference Process}: Starting with noise $x_0$, a denoising model $\phi$ generates $x_{1}, x_{2}, \dots, x_T$ step by step.
\end{itemize}

By precisely modeling the reverse process, the diffusion model $\phi$ can generate new samples that match the original data distribution. Diffusion models define an increasing schedule $\{\alpha_i\}_{i=0}^T$ to control the noise level at each step. In the diffusion process, noise is added at each step $t$, transforming the data $x_{t+1}$ into $x_t$
\[
q(x_t | x_{t+1}) = \mathcal{N}(x_t; \sqrt{\alpha_t} x_{t+1}, (1-\alpha_t) \mathbf{I}).
\]
The direct transition from $x_T$ to $x_t$ can be written as
\[
q(x_t | x_T) = \mathcal{N}(x_t; \sqrt{\bar{\alpha}_t} x_T, (1 - \bar{\alpha}_t) \mathbf{I}),
\]
where $\bar{\alpha}_t = \prod_{i=t}^{T-1} \alpha_i$ is the cumulative noise schedule, controlling the overall noise level from $x_0$ to $x_t$.
In the inference process, the denoiser iteratively reconstructs the data by 
$$
x_{t+1}=\frac{1}{\sqrt{\alpha_t}}x_t - \frac{1 - \alpha_t}{\sqrt{1 - \bar{\alpha}_t} \sqrt{\alpha_t}}\epsilon_{\phi}+\sigma_z\epsilon,
$$
We omit the additional $\sigma_z\epsilon$ for stable weight generation. The key point of training a variational model is to maximize the Evidence Lower Bound (ELBO). In the diffusion algorithm, optimizing the ELBO is essentially equivalent to minimizing the denoising match term
$$
     \underset{t\in[0,T'), q(x_t | x_{T'})}{{E}} \left[ \mathrm{D}_{\mathrm{KL}}\left( q(x_{t+1} | x_{t}, x_T') \parallel p_{\phi}(x_{t+1} | x_{t}) \right) \right].
$$
which is also the objective optimized by our local consistency diffusion.
\subsection{REPTILE}
REPTILE is a first-order optimization-based meta-learning algorithm that simplifies training while retaining strong adaptability across tasks. It eliminates the need for second-order gradients, making it computationally efficient compared to algorithms like MAML. The training process consists of two loops: the inner-loop and the outer-loop.

In the inner loop, REPTILE performs gradient descent on a sampled task \( {T}_i \) using the task’s support set. Starting from the meta-parameters \( \phi \), the task-specific parameters \( \phi_i \) are updated for \( K \) steps using
\[
\phi_i^{(t+1)} = \phi_i^{(t)} - \eta \nabla_{\phi_i^{(t)}} {L}_{{T}_i}(\phi_i^{(t)}),
\]
where \( \eta \) is the inner-loop learning rate and \( {L}_{{T}_i} \) is the loss for the task\( {T}_i \). 

In the outer-loop, the meta-parameters \( \phi \) are updated by moving them toward the task-specific parameters \( \phi_i \) obtained from the inner loop. This meta-update is given by
\[
\phi \leftarrow \phi+ \zeta (\phi_i - \phi),
\]
where \( \zeta \) is the outer-loop learning rate. 

By iteratively repeating the inner and outer loops across multiple tasks drawn from the task distribution $\mathcal{T}$ , REPTILE optimizes the meta-parameters \( \phi \) to find an initialization that enables fast adaptation to new tasks with minimal gradient steps. Its simplicity lies in avoiding second-order derivatives, while its effectiveness is demonstrated across diverse applications such as few-shot learning and domain generalization.

\clearpage
\section{Setup Details}\label{sec:appendix_4}
\subsection{Dataset}\label{sec:dataset setup}
\noindent\textbf{Omniglot.}
The raw Omniglot dataset contains 1623 handwritten characters from 50 alphabets, each with 20 instances in 28$\times$28 grayscale format. We partition the classes of the training set, evaluation set, and testing set into 800:400:432. We use Omniglot in three scenarios. We used the Omniglot dataset in our preliminary experiments, ablation experiments, and comparative experiments. For the construction of the classification task, we referred to the experimental setup by MAML.
\\
\par
\noindent\textbf{Mini-ImageNet.}
The raw Mini-ImageNet contains 100 classes, each containing 600 instances in 84$\times$84 grayscale format. We partition classes of training set, evaluation set, and testing set into 64:16:20. The usage of Mini-ImageNet is the same as Omniglot, and we also follow the setup given by MAML.
\\
\par
\noindent\textbf{Tiered-ImageNet.}
The Tiered-Imagenet dataset is a larger-scale few-shot learning benchmark derived from ImageNet, containing 608 classes grouped into 34 higher-level categories. Each image is in 84$\times$84 resolution with RGB channels. Following the standard protocol (Meta-Baseline), we partition the dataset into 351 classes for training, 97 for validation, and 160 for testing, ensuring no class overlap across phases. We use Tiered-ImageNet in all experimental stages, including ablation and comparative evaluations, to assess the generalization ability of our method on a more diverse and semantically structured dataset.
\\
\par
\noindent\textbf{ImageNet-1K.}
The raw ImageNet-1K is a benchmark dataset with 1000 classes, 1.2 million training images, and 50000 validation images, typically resized to a resolution of 224$\times$224 pixels. We partitioned the dataset into 20k subsets, each containing 50 classes with 50 images per class. We use this dataset for pre-training and perform transfer learning evaluation on other unseen datasets.
\\
\par
\noindent\textbf{CIFAR-10 CIFAR-100 STL-10 Aircraft Pets.}
CIFAR-10 and CIFAR-100 are image datasets introduced by Alex Krizhevsky, containing 60000 images resized to 32$\times$32 pixels. CIFAR-10 includes 10 classes, while CIFAR-100 features 100 fine-grained classes. STL-10, derived from ImageNet, consists of 10 classes with 13000 labeled images and 100000 unlabeled images, with a resolution of 96$\times$96 pixels. The Aircraft dataset includes 10000 images across 100 aircraft models, with hierarchical labels for manufacturer, family, and variant. The Pets dataset consists of 7349 images of 37 pet breeds, with annotations for class labels, bounding boxes, and pixel-level segmentation. We use these datasets to evaluate the model's transfer learning capabilities, which means the labels of these datasets are not visible to the model.
\\
\par
\noindent\textbf{DomainNet.}
DomainNet is a dataset for multi-domain generalization. We use it to evaluate algorithms' ability for few-shot domain generalization. It consists of 345 classes from 6 domains, with a resolution of 224$\times$224 pixels. We use Clipart, Infograph, Painting, Quickdraw, and Real domains for training, while Sketch domains are for testing. The tasks we constructed are 5-way 1-shot and 20-way 5-shot. Note that the testing set shares the same 345 classes as the training set.
\\
\par
\noindent\textbf{GLUE.}
The GLUE Benchmark (General Language Understanding Evaluation) tests models on 9 diverse NLP tasks, including CoLA for grammatical acceptability, SST-2 for sentiment classification, MRPC for paraphrase detection, STS-B for sentence similarity, QQP for duplicate question detection, MNLI for natural language inference, QNLI for question-answer validation, RTE for entailment classification, and WNLI for pronoun resolution. We use this dataset to test the efficiency of different algorithms for multi-task fine-tuning on LLM models. Specifically, we use five binary classification tasks, \ie, SST-2, QQP, RTE, WNIL, and CoLA for the training of \ourmethod. Then we use the other two tasks, \ie, MRPC and QNIL, to evaluate the performance of different fine-tuning algorithms.
\\
\par

\subsection{Model Configuration}

For the downstream network $\theta$, we adopt different architectures for different tasks. In tasks on the Omniglot and Mini-ImageNet datasets, we follow the standard setup used by most meta-learning methods, employing 4 convolution blocks with batch normalization and ReLU with a linear probe for classification. For zero-shot tasks on CIFAR-10, CIFAR-100, STL-10, Aircraft, and Pets, we use the commonly adopted ResNet-12 with a linear probe for classification. In multi-domain generalization tasks, we maintain the same setup with~\cite{hierar_meta}. In the LLM multi-task fine-tuning scenario, the downstream network is the LLM model itself, \ie, RoBERTa-base. For the diffusion model $f^G_{\phi}$, \ourmethod employs the same U-Net architecture given by Meta-Diff~\cite{MetaDiff}. The task embedding $Emb_{T_i}$ is calculated by a ResNet101 backbone used by~\cite{ICIS}. In the weight preparation stage, the real-world optimizer (Adam) uses a fixed learning rate of 0.005 and an automatic early-stopping strategy~\cite{early-stop} to determine the downstream task training epoch $M$. In the meta-training stage, we set the learning rate $\eta$, meta-learning rate $\zeta$ and training epochs to 0.005, 0.001, and 6000, respectively. The segment number $k$ is set to 3, and the diffusion step $T$ is set to 20.

\clearpage
\section{Additional Experiments}
\subsection{Sensitive Study}
\begin{figure}[H]
  \centering
  \begin{minipage}[t]{0.48\linewidth}
    \centering
    \includegraphics[width=\linewidth]{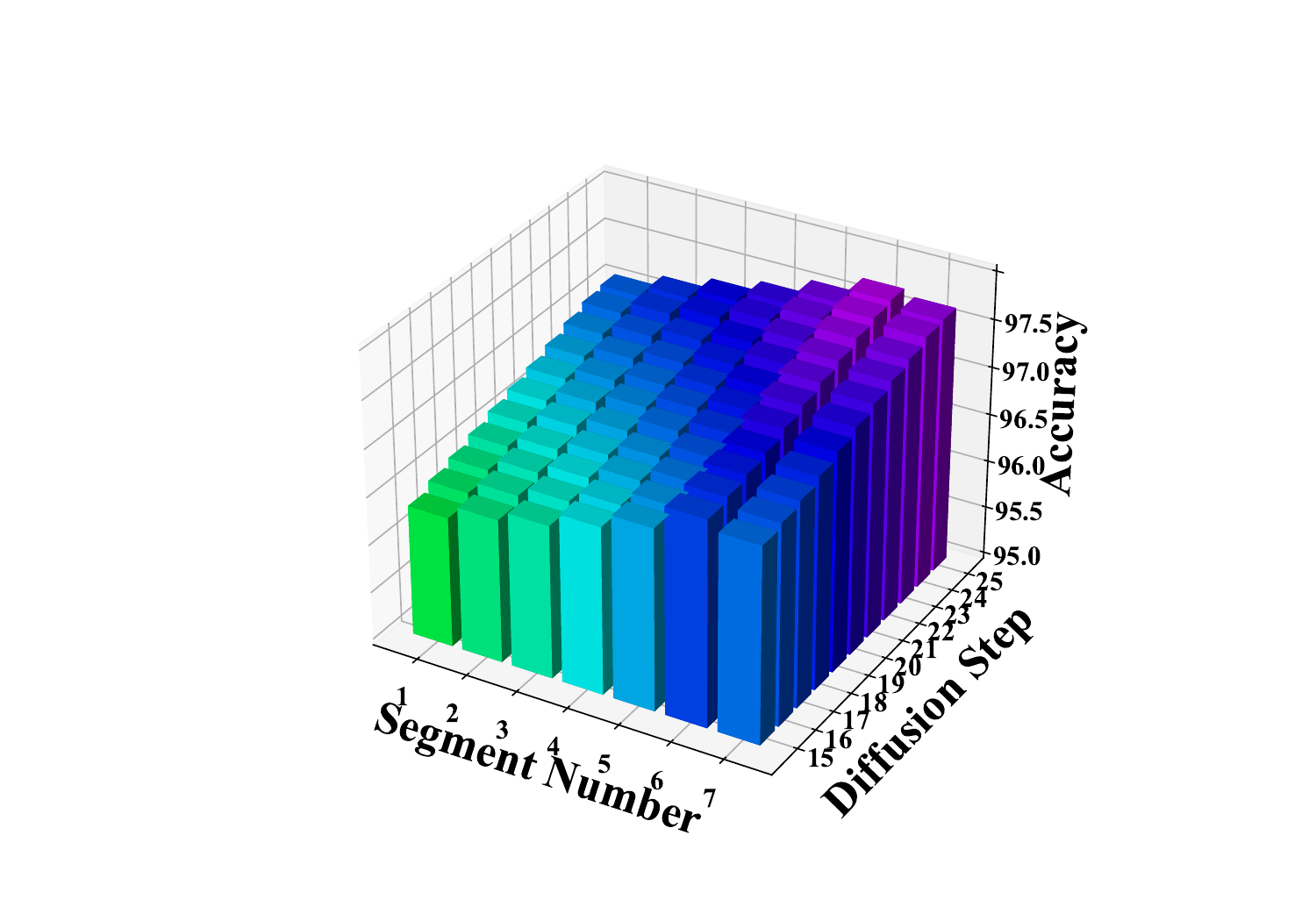}
    \caption{Sensitive study on Omniglot}
    \label{fig:sen_omni}
  \end{minipage}\hfill
  \begin{minipage}[t]{0.48\linewidth}
    \centering
    \includegraphics[width=\linewidth]{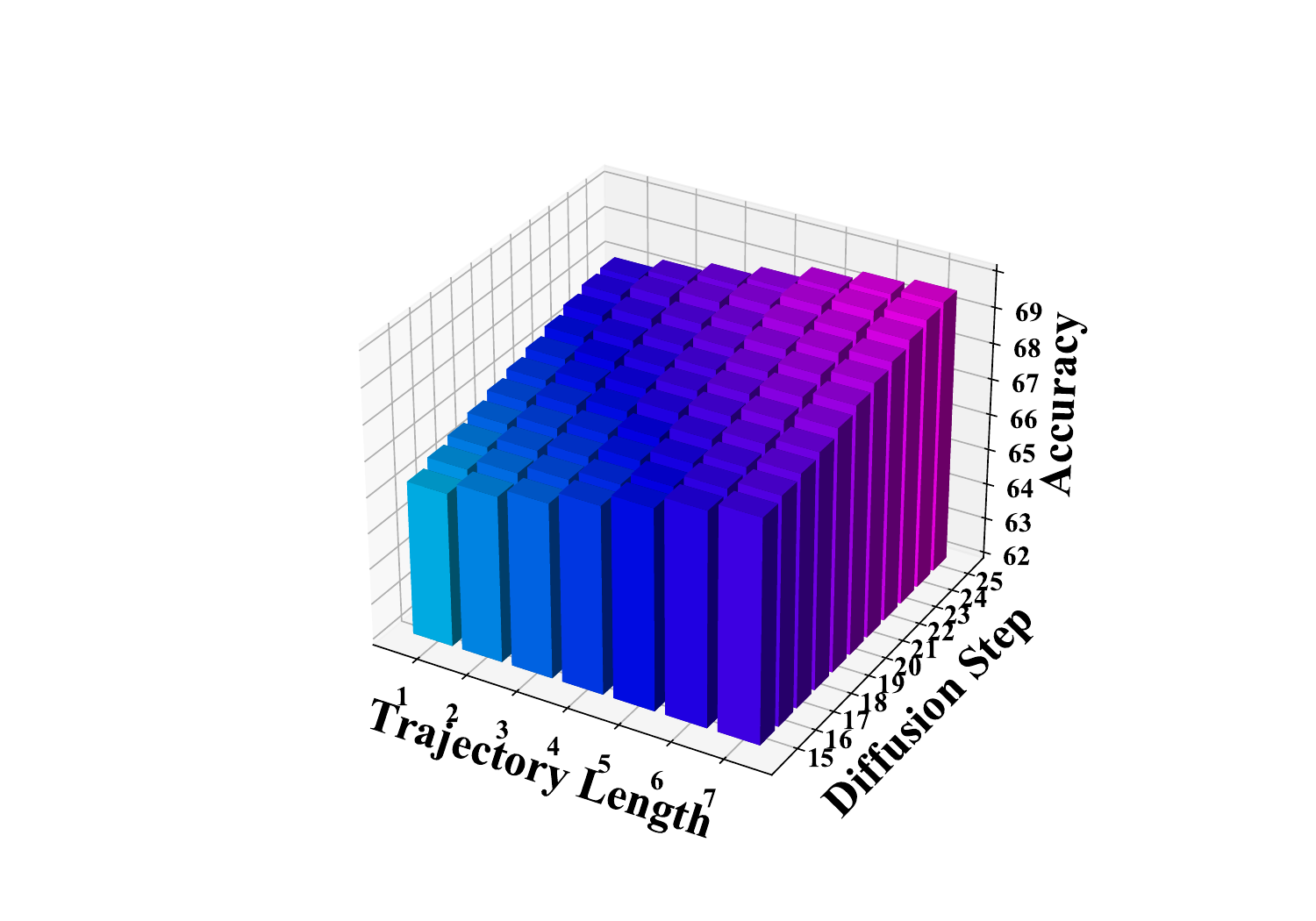}
    \caption{Sensitive study on Mini-Imagenet}
    \label{fig:sen_mini}
  \end{minipage}
\end{figure}

Compared to existing weight generation methods, the two hyperparameters of \ourmethod, \ie, segment number and diffusion steps, may affect the model's sensitivity. To evaluate the robustness of our approach, we analyze the impact of segment number and diffusion steps on classification accuracy for Mini-ImageNet and Omniglot. As shown in Figure~\ref{fig:sen_omni} and Figure~\ref{fig:sen_mini}, the accuracy remains stable across different parameter settings, with only minor variations. While increasing segment number and diffusion steps can slightly improve performance, excessive changes do not lead to significant degradation. The consistency across both datasets indicates that \ourmethod is insensitive to these hyperparameters, demonstrating robustness. Note that we do not use the parameter combination that achieves the highest accuracy in this experiment as the default setting. We aim to maintain performance above the state-of-the-art levels while reducing computational costs during inference.

\subsection{Adaptability to Different Architectures}
To verify the effectiveness of \ourmethod under different scale network structures, we compare them with their degraded versions, \ie, REPTILE, TW-Di, and \ourmethodo. Table~\ref{tab:archi_omni} and Table~\ref{tab:archi_mini} show the effectiveness of our method on Swin Transformer, ResNet18, and MobileNetV2 architectures. The results demonstrate that each component of our method remains effective across neural network architectures of different scales.

\begin{table}[h]
    \centering
    \begin{tabular}{lccc}
        \toprule
        \textbf{Method} & \textbf{Swin Transformer} & \textbf{ResNet18} & \textbf{MobileNetV2} \\
        \midrule
        REPTILE  & 98.27 & 97.11 & 93.79\\
        TW-Di  & 97.43 & 96.60 & 93.61 \\
        \ourmethodo    & 98.93 & 98.44 & 94.92 \\
        \ourmethod    & 99.90 & 98.82 & 95.84 \\
        \bottomrule
    \end{tabular}
    \caption{Comparison of accuracy across different network structures on Omniglot 5-way 1-shot tasks.}
    \label{tab:archi_omni}
\end{table}

\begin{table}[h]
    \centering
    \begin{tabular}{lccc}
        \toprule
        \textbf{Method} & \textbf{Swin Transformer} & \textbf{ResNet18} & \textbf{MobileNetV2} \\
        \midrule
        REPTILE  & 57.27  & 52.55  & 40.67  \\
        TW-Di  & 66.26  & 60.32  & 53.27  \\
        \ourmethodo    & 76.75  & 70.76  & 59.33  \\
        \ourmethod    & 82.38  & 77.85  & 64.81  \\
        \bottomrule
    \end{tabular}
    \caption{Comparison of accuracy across different network structures on Mini-ImageNet 5-way 1-shot tasks.}
    \label{tab:archi_mini}
\end{table}

\subsection{Domain Generalization}
We added the 5-way 1-shot domain generalization experiment on Meta-Dataset~\cite{meta-dataset} in Table~\ref{tab:meta_dataset_unseen}.

\begin{table*}[h]
\centering
\resizebox{\linewidth}{!}{
\begin{tabular}{lccccccc}
\toprule
\textbf{Method} & \textbf{Traffic Sign} & \textbf{MSCOCO} & \textbf{MNIST} & \textbf{CIFAR-100} & \textbf{CIFAR-10} & \textbf{Avg} & \textbf{Latency (ms)} \\
\midrule
REPTILE             & 43.32 ± 0.88 & 39.62 ± 1.02 & 68.50 ± 0.88 & 40.60 ± 0.98 & 43.73 ± 0.94 & 47.15 & 26.8 \\
Meta-Baseline       & 47.52 ± 0.92 & 41.20 ± 1.04 & 68.47 ± 0.85 & 43.07 ± 0.85 & 46.02 ± 0.90 & 49.26 & 26.2 \\
Meta-Hypernetwork   & 55.91 ± 0.91 & 48.47 ± 0.91 & 80.55 ± 0.91 & 50.67 ± 0.91 & 54.14 ± 0.91 & 57.95 & 10.3 \\
GHN3                & 44.22 ± 0.93 & 47.27 ± 1.14 & 76.44 ± 0.91 & 36.38 ± 0.83 & 42.52 ± 1.06 & 49.37 & 20.1 \\
OCD                 & 52.14 ± 0.90 & 57.74 ± 1.12 & 75.23 ± 0.92 & 47.51 ± 0.91 & 55.54 ± 1.13 & 57.63 & 9.7  \\
Meta-Diff           & 57.19 ± 0.94 & 60.01 ± 1.04 & 78.47 ± 0.94 & 55.29 ± 0.89 & 61.38 ± 1.01 & 62.47 & 10.9 \\
\ourmethodo (ours) & 58.48 ± 1.02 & 60.58 ± 1.00 & 80.82 ± 0.93 & 55.13 ± 0.93 & 62.93 ± 1.04 & 63.59 & 7.0  \\
\rowcolor{myblue}
\ourmethod (ours)  & 59.46 ± 0.99 & 62.70 ± 1.01 & 83.65 ± 0.94 & 57.07 ± 0.91 & 65.11 ± 1.05 & 65.60 & \textbf{5.2} \\
\bottomrule
\end{tabular}}
\caption{Comparison of 5-way 1-shot performance on the last 5 unseen datasets of Meta-Dataset. These methods are trained on the first 8 datasets of Meta-Dataset.}
\label{tab:meta_dataset_unseen}
\end{table*}

\clearpage
\section{Related Work}\label{sec:related_work}
\noindent\textbf{Meta-Learning.}
Meta-learning often employs a bi-level optimization-based paradigm~\cite{MAML,ANIL, meta_baseline, meta_bootstrap, on_convergence} for better generalization performance~\cite{generalization_information_1,generalization_information_2,generalization_information_3,generalization_PAC_1,generalization_stable_1} on few-shot and reinforcement learning tasks. However, it incurs significant costs for weight fine-tuning, especially in multi-task scenarios. Methods like ANIL \cite{ANIL} and Reptile \cite{REPTILE} improve training efficiency by minimizing updates to only essential task-specific layers or by approximating meta-gradients, respectively. However, these methods still rely on gradient computation and fail to achieve superior accuracy.
\vspace{5pt}
\par
\noindent\textbf{Network Weights Generation.} 
Hypernetwork~\cite{hypernetworks} is the first method that uses one network to generate another's weights, leading to extensions like the HyperSeg~\cite{hypernetwork_2} for downstream task flexibility. Conditional diffusion models provide another approach~\cite{protodiff}, with OCD leveraging overfitting~\cite{OCD} and Meta-Diff enhancing few-shot adaptability~\cite{MetaDiff}. Hyper-representations~\cite{VAE_weights} embed model characteristics to support weight generation for unseen tasks, while Image-free Classifier Injection~\cite{ICIS} achieves zero-shot classification via semantic-driven weights. These methods are constrained by their single-level optimization approach, which presents limitations in both cross-task knowledge transfer capabilities and potential adaptability to new tasks. Note that Meta-Diff essentially uses a single-level optimization strategy (see its Algorithm.1 for evidence). Most importantly, these methods overlook the supervision signal in local target weights, limiting the model's efficiency and accuracy.

% \subsection{Symbol Table}
% Potentially ambiguous symbols are described in Table~\ref{tab:symbols} to enhance the reader's comprehension of this paper.
% \begin{table}[h!]
% \centering
% \begin{tabular}{|>{\centering\arraybackslash}m{0.15\linewidth}|p{0.7\linewidth}|}
% \hline
% \textbf{Symbol} & \textbf{Description} \\ \hline
% $\epsilon$ & Gaussian noise. \\ \hline
% $\phi$ & Diffusion model, \ie, denoiser. \\ \hline
% $\{\alpha_i\}_{i=0}^T$ & Decay schedule for diffusion process. \\ \hline
% {$C_{infer}$} & Inference chain of the diffusion model. \\ \hline
% $Tra$ & Optimization trajectory. \\ \hline
% $r(\cdot)$ & Function that maps $Tra$ to $C_{\text{infer}}$ \\ \hline
% $x_i$ & Element of the inference chain. \\ \hline
% \multirow{2}{*}{$\theta^b$} & Body, \ie, bottleneck of the downstream network, also used for task embedding. \\ \hline
% $\theta^h$ & Head of the downstream network. \\ \hline
% $\hat{\theta}$ & Weight generated by the diffusion model. \\ \hline
% $\eta$ & Inner-loop learning rate. \\ \hline
% $\zeta$ & Outer-loop learning rate. \\ \hline
% $K$ & Number of Inner-loop step. \\ \hline
% \end{tabular}
% \caption{Symbols and their descriptions.}
% \label{tab:symbols}
% \end{table}

% \input{sec/rebuttal}
% \input{sec/checklist}
\end{document}